\def\eqref#1{equation~\ref{#1}}
\def\1{\bm{1}}
\DeclareMathAlphabet{\mathsfit}{\encodingdefault}{\sfdefault}{m}{sl}
\SetMathAlphabet{\mathsfit}{bold}{\encodingdefault}{\sfdefault}{bx}{n}
\DeclareMathOperator*{\argmin}{arg\,min}
\DeclareMathOperator{\Beta}{Beta}
\title[Quantile Risk Control]{Quantile Risk Control: A Flexible Framework for Bounding the Probability of High-Loss Predictions}
\author{\Name{Jake C. Snell}\thanks{Work done while at University of Toronto and Vector Institute.}
\Email{js2523@princeton.edu}\\
\addr Princeton University \\
\Name{Thomas P. Zollo}
\Email{tpz2105@columbia.edu}\\
\addr Columbia University \\
\Name{Zhun Deng}
\Email{zhundeng@g.harvard.edu}\\
\addr Harvard University, Columbia University\\
\Name{Toniann Pitassi}
\Email{toni@cs.columbia.edu}\\
\addr Columbia University, University of Toronto\\
\Name{Richard Zemel}
\Email{zemel@cs.columbia.edu}\\
\addr Columbia University, University of Toronto}
\date{October 2020}
\begin{document}

\maketitle

\begin{abstract}
	Rigorous guarantees about the performance of predictive algorithms are necessary in order to ensure their responsible use. Previous work has largely focused on bounding the expected loss of a predictor, but this is not sufficient in many risk-sensitive applications where the distribution of errors is important. In this work, we propose a flexible framework to produce a family of bounds on quantiles of the loss distribution incurred by a predictor.
	Our method takes advantage of the order statistics of the observed loss values rather than relying on the sample mean alone.
	We show that a quantile is an informative way of quantifying predictive performance, and that our framework applies to a variety of quantile-based metrics, each targeting important subsets of the data distribution.
	We analyze the theoretical properties of our proposed method and demonstrate its ability to rigorously control loss quantiles on several real-world datasets.
\end{abstract}

\section{Introduction}
Learning-based predictive algorithms have a great opportunity for impact, particularly in domains such as healthcare, finance and government, where outcomes carry long-lasting individual and societal consequences. Predictive algorithms such as deep neural networks have the potential to automate a plethora of manually intensive tasks, saving vast amounts of time and money. Moreover, when deployed responsibly, there is great potential for a better decision process, by improving the consistency, transparency, and guarantees of the system.  
As just one example, a recent survey found that a majority of radiologists anticipated that AI-based solutions will lead to fewer medical errors, less time spent on each exam, and more time spent with patients \citep{waymel_impact_2019}. In order to realize such benefits, it is crucial that predictive algorithms are rigorously yet flexibly validated prior to deployment. The validation should be \emph{rigorous} in the sense that it produces bounds that can be trusted with high confidence. It should also be \emph{flexible} in several ways. First, we aim to provide bounds on a variety of loss-related quantities (risk measures): the bound could apply to the mean loss, or the 90th percentile loss, or the average loss of the 20\% worst cases. Furthermore, the guarantees should adapt to the difficulty of the instance: easy instances should have strong guarantees, and as the instances become harder, the guarantees weaken to reflect the underlying uncertainty.
We also want to go beyond simply bounding the performance of a fixed predictor and instead choose the optimal predictor from a set of candidate hypotheses that minimizes some target risk measure. 

Validating the trustworthiness and rigor of a given predictive algorithm is a very challenging task.
One major obstacle is that the guarantees output by the validation procedure should hold with respect to any unknown data distribution and across a broad class of predictors including deep neural networks and complicated black-box algorithms. Recent work has built upon distribution-free uncertainty quantification to provide rigorous bounds for a single risk measure: the expected loss \citep{bates_distribution-free_2021,angelopoulos2021learn}. However, to our knowledge there has been no work that unifies distribution-free control of a set of expressive risk measures into the same framework.

Our key conceptual advancement is to work with lower confidence bounds on the cumulative distribution function (CDF) of a predictor's loss distribution as a fundamental underlying representation. We demonstrate that a lower bound on the CDF can be converted to an upper bound on the quantile function. This allows our framework to seamlessly provide bounds for any risk measure that can be expressed as weighted integrals of the quantile function, known as quantile-based risk measures (QBRM) \citep{dowd_after_2006}. QBRMs are a broad class of risk measures that include expected loss, value-at-risk (VaR), conditional value-at-risk (CVaR) \citep{rockafellar_optimization_2000}, and spectral risk measures \citep{acerbi_spectral_2002}. Our approach inverts a one-sided goodness-of-fit statistic to construct a nonparametric lower confidence bound on the loss CDF for each candidate predictor. Furthermore, our confidence lower bounds hold simultaneously across an entire set of candidate predictors,
and thus can be used as the basis for optimization of a target risk measure. For example, our approach can be used to choose a threshold or set of thresholds %
on the scores produced by a complicated black-box prediction algorithm.
Figure~\ref{fig:teaser} illustrates an overview of our framework.
\begin{figure}[ht]
	\floatconts  %
	{fig:subfigex}
	{\caption{An overview of our quantile risk control framework. Given $n$ validation samples $X_1, \ldots, X_n$ drawn i.i.d. from the loss distribution with CDF $F$ we produce a upper confidence bound $\hat{F}_n^{-1}(p)$ on the true quantile function $F^{-1}(p) \triangleq \inf \{ x : F(x) \ge p \}$. Left: The same bound on the quantile function can be used to bound multiple quantile-based risk measures. Right: An upper bound $\hat{F}^{-1,h}_n$ is computed for the quantile function of the loss distribution of each predictor $h \in \mathcal{H}$ and the one minimizing an upper confidence bound on the target risk measure $\hat{R}^+_\psi(h) = \int_0^1 \psi(p) \hat{F}_n^{-1,h}(p) \, dp$ is selected.}}
	{%
		\subfigure[Bounding multiple QBRMs with $\hat{F}_n^{-1}(p)$.]{\label{fig:circle}%
			\includegraphics[width=0.34\linewidth]{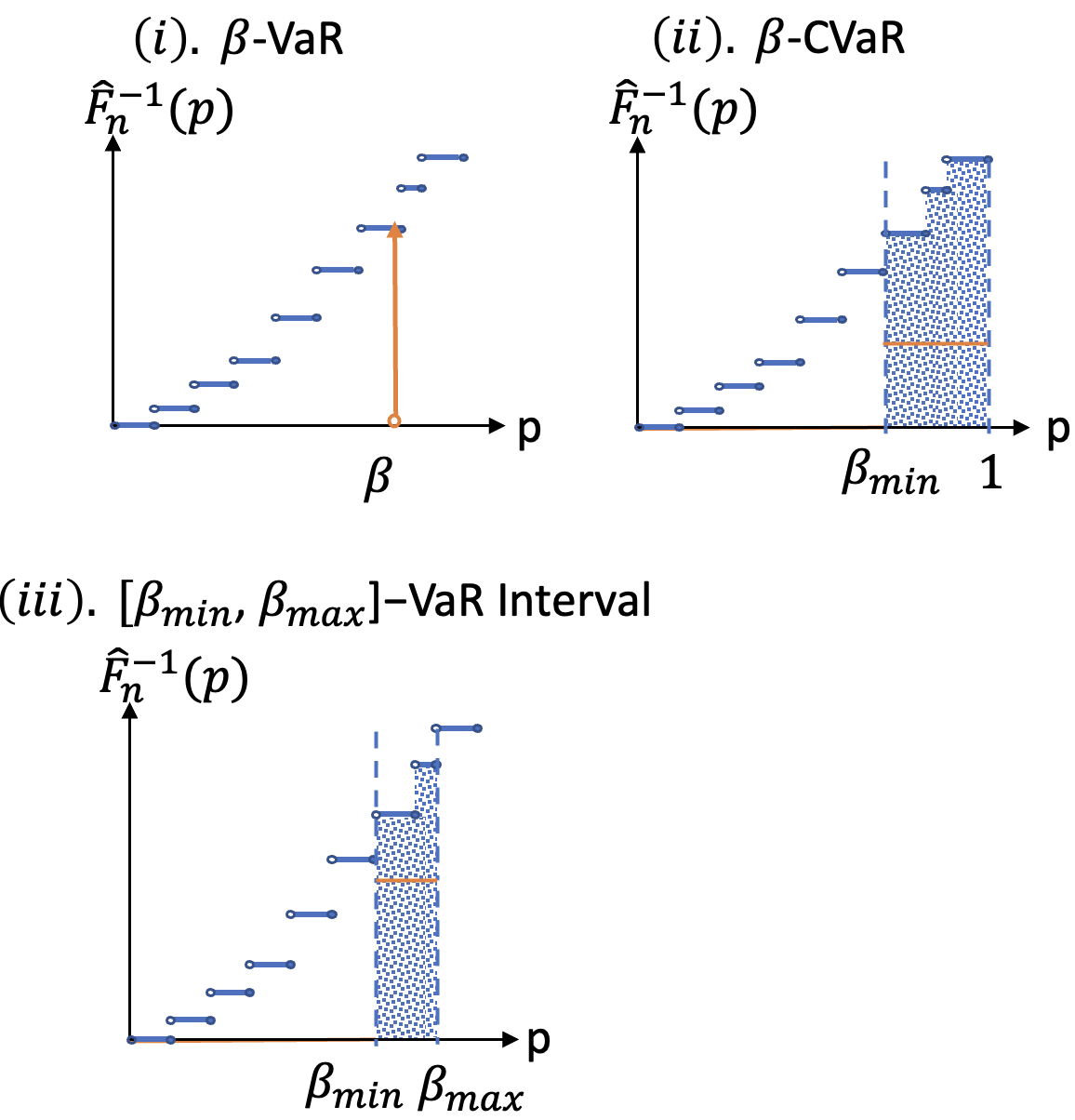}}%
    \qquad\qquad
		\subfigure[Using one of the target risk measures, the $\beta$-VaR, to select the optimal $h^*$.]{
			\label{fig:square}%
			\includegraphics[width=0.44\linewidth]{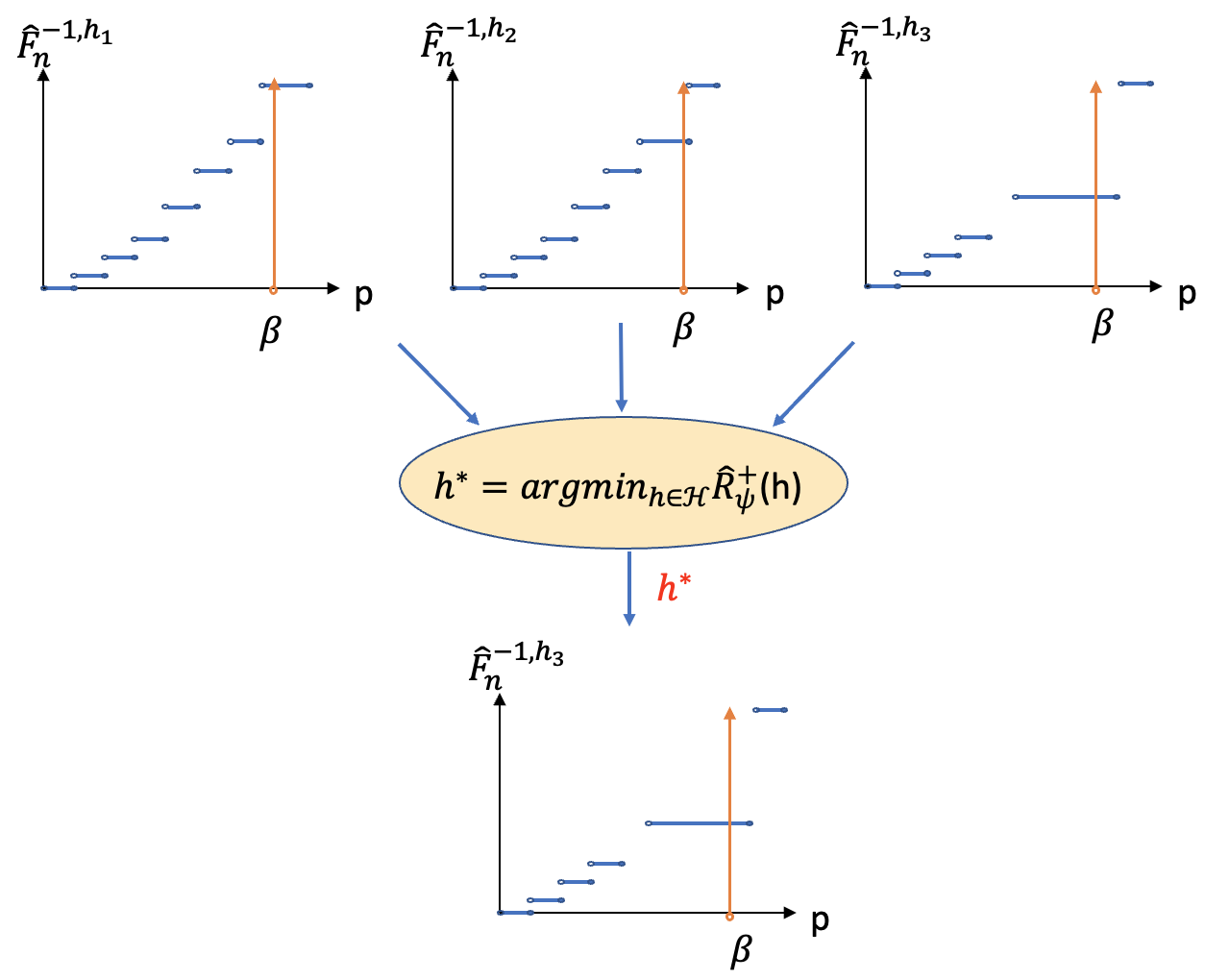}
		}
	}\label{fig:teaser}
\end{figure}

We conduct experiments on real-world datasets where the goal is to tune the threshold of a deep neural network with respect to four representative risk measures: the expected loss; VaR; CVaR; and the VaR-interval. The expected loss is the mean loss over the test distribution. The $\beta$-VaR measures the maximum loss incurred on a specific quantile, after excluding a $1-\beta$ proportion of high-loss outliers. The $\beta$-CVaR measures the mean loss for the worst $1-\beta$ proportion of the population. Finally, the VaR-interval can be interpreted as optimizing an uncertain loss quantile that belongs to a known range, i.e., for a range of $\beta$ values.
We compare various methods for controlling these risk measures, including a novel one that is tailored to the CVaR and VaR-interval settings, and show that our approach bounds the loss CDF well across all scenarios. We also demonstrate how our framework can be used to achieve fairness by equalizing a target risk measure across groups within the population.

Our work unifies the rigorous control of quantile-based risk measures into a simple yet expressive framework grounded in lower confidence bounds on the loss CDF. Unlike previous approaches which only target a single risk measure, our approach provides a flexible and detailed understanding of how a predictive algorithm will perform after deployment. We provide a family of bounds that hold for any quantile-based risk measure, even those not optimized for prior to deployment. Practitioners using our method can thus easily and confidently respond to inquiries from regulators and other key stakeholders, thereby building trust between organizations deploying predictive algorithms and the individuals whose lives are affected by them. This rigorous and flexible validation is an important step towards the responsible use of predictive algorithms in order to benefit society.
\section{Background \& Related Work}

\paragraph{Distribution-free uncertainty quantification.} %
This line of work originated with conformal prediction~\citep{vovk2005algorithmic,shafer2008tutorial}, which seeks to produce
{\it coverage} guarantees: {\it prediction sets} or {\it prediction intervals} that are guaranteed to contain the target output with high probability. Conformal prediction offers a lightweight algorithm that can be applied on top of complex predictors to obtain provable coverage guarantees.
Recent work has generalized conformal prediction to bounding the expected loss with respect to more complex loss functions ~\citep{bates_distribution-free_2021,angelopoulos2021learn}.
Like these works, we also are interested in provable distribution-free guarantees for general loss functions.
While previous work focuses on bounding the expected loss,
part of our contribution lies in bounding the loss for quantiles of the data distribution. 
\citet{angelopoulos2022gentle} offer an accessible overview of distribution-free uncertainty quantification.

\paragraph{Empirical processes.}
Several methods consider the relationship between the empirical and true CDF. %
The %
DKW inequality~\citep{dvoretzky1956asymptotic} generates CDF-based confidence bounds. %
Later variants~\citep{massart1990tight,naaman2021tight} %
provide tighter bounds or generalize to wider settings.
The confidence bound provided by the DKW inequality is 
closely related to the Kolmogorov–Smirnov (KS) test \citep{massey1951kolmogorov} -- a classic test of goodness of fit.
The KS test has low power for detecting deviations in the tails and so 
other variance-weighted tests of goodness of fit address this,  e.g.,
\citet{anderson1952asymptotic,cramer1928composition}.
In order to handle extreme tails, \citet{berk_goodness--fit_1979} propose a statistic which looks for the most statistically significant deviation.
\citet{moscovich2016exact} and \citet{moscovich2020fast} study properties of the Berk-Jones statistics and provide methods for fast computation. 
To our knowledge, none of these techniques have been used as in our work,
to produce bounds on quantiles of the loss distribution incurred by a predictor. %
For more background on quantile functions, the reader may refer to \citep[Chapter~7]{shorack_probability_2000}. For an overview of empirical processes, \citet{shorack_empirical_2009} is a comprehensive reference.

\paragraph{Risk metrics.}
A number of approaches have been used to estimate confidence intervals for
estimated risk measures.
Our focus is on a broad class of quantile-based risk measures (QBRMs), which
includes VaR, CVaR, and spectral risk measures. %
The value-at-risk at the $\beta$ level ($\beta$-VaR) has long been used to express the risk associated with an investment and represents the maximum loss that the investment will incur with probability $\beta$.
Conditional value-at-risk (CVaR)~\citep{rockafellar_optimization_2000} represents the mean loss incurred in the tail of the distribution.
The work of \citet{dowd_using_2010} inspires us;
it presents a general approach for estimating confidence
intervals for any type of quantile-based risk measure (QBRM).
Here we construct confidence bounds on the loss CDF in order to bound such measures for the general problem of predictor selection.

Generalization properties of
risk-sensitive loss functions were studied in \citet{lee2020learning} where
confidence bounds are given for a large class of OCE risk measures, including CVaR and 
many of the quantile-based risk measures studied here.
In the context of reinforcement learning, \citep{chandak_universal_2021} propose an off-policy method for obtaining confidence bounds of similar risk measures.

\section{Preliminaries}
In this section, we introduce our problem formulation and briefly discuss previous approaches that rigorously bound the expected loss. We then review quantile-based risk measures (QBRMs), an expressive class of risk measures that includes VaR, CVaR, and spectral risk measures.
\subsection {Problem Setup \& Notation}\label{sec:preliminaries}
We assume the presence of a black-box predictor $h: \mathcal{Z} \rightarrow \hat{\mathcal{Y}}$ that maps from an input space $\mathcal{Z}$ to a space of predictions $\hat{\mathcal{Y}}$. We also assume a loss function $\ell: \hat{\mathcal{Y}} \times \mathcal{Y} \rightarrow \mathbb{R}$ that quantifies the quality of a prediction $\hat{Y}$ with respect to the target output $Y$. Let $(Z, Y)$ be drawn from an unknown data distribution $\mathcal{D}$ over $\mathcal{Z} \times \mathcal{Y}$ and define the random variable $X \triangleq \ell(h(Z), Y)$ to be the loss induced by $h$ on $\mathcal{D}$. Recall that the cumulative distribution function (CDF) of a random variable $X$ is defined as $F(x) \triangleq P(X \le x)$. Our goal is to produce rigorous upper bounds on the risk $R(F)$ for a rich class of risk measures $R \in \mathcal{R}$, given a set of validation loss samples $X_{1:n} = \{X_1, \ldots, X_n\}$. The notation used in this paper is summarized in Table~\ref{tab:notation}.
\begin{table}[htp]
    \centering
    \begin{tabular}{lp{12cm}}
    \toprule
    \textbf{Symbol} & \textbf{Meaning}  \\ \midrule
    $\mathcal{Z}$ & Input space. \\
    $\mathcal{Y}$ & Target output space. \\
    $\hat{\mathcal{Y}}$ & Prediction space. \\
    $h$ & A predictor mapping from $\mathcal{Z}$ to $\hat{\mathcal{Y}}$. \\
    $\mathcal{D}$ & Data distribution over $\mathcal{Z} \times \mathcal{Y}$. \\
    $(Z, Y)$ & Random variables representing input-target pairs drawn from $\mathcal{D}$. \\
    $\ell(\hat{Y}, Y)$ & Loss incurred by predicting $\hat{Y} \in \hat{\mathcal{Y}}$ when the ground truth is $Y \in \mathcal{Y}$. \\
    $X^h$ & Random variable representing $\ell(h(Z), Y)$. This is the loss incurred by applying predictor $h$ to input $Z$ when the ground truth is $Y$.
    The superscript $h$ may be dropped when the predictor being referred to is clear from context. \\
    $F^h$ & The cumulative distribution function (CDF) of $X^h$, also referred to as the loss CDF, defined as $F^h(x) \triangleq P(X^h \le x)$. \\
    $F^{-1,h}$ & Quantile function defined as $F^{-1,h}(p) \triangleq \inf \{ x : F^h(x) \ge p \}$. \\
    $R(F)$ & Risk measure quantifying the risk associated with loss CDF $F$. \\
     $\psi$ & Weighting function defining a quantile-based risk measure (see Definition~\ref{def:qbrm}). \\
    $X_{1:n}$ & A sequence of loss random variables $X_1, \ldots, X_n$. \\
    $X_{(i)}$ & The $i$-th order statistic of $X_{1:n}$. Order statistics satisfy $X_{(1)} \le \ldots \le X_{(n)}$. \\
    $\succeq$ & $F \succeq G$ denotes $F(x) \ge G(x)$ for all $x \in \mathbb{R}$. \\
    $\hat{F}^h_n$ & A lower confidence bound on $F^h$ as a function of $X_1^h, \ldots, X_n^h$. \\
    $\hat{F}^{-1, h}_n$ & The upper confidence bound on $F^{-1,h}$ corresponding to $\hat{F}^h_n$. \\
    \bottomrule
    \end{tabular}
    \caption{Summary of the notation used in this paper.}
    \label{tab:notation}
\end{table}
\subsection{Distribution-Free Control of Expected Loss}
Previous work has investigated distribution-free rigorous control of the expected loss. Recall that $F$ is the CDF of the loss r.v. $X$. Expected loss may be defined as $R(F) \triangleq \mathbb{E}[X]$.
These methods typically involve constructing a confidence region based on the sample mean $\bar{X} \triangleq \frac{1}{n} \sum_{i=1}^n X_i$ within which the true expected loss will lie with high probability. For example, suppose that $X \in [0, 1]$ and let $\mu \triangleq \mathbb{E}[X]$. Hoeffding's inequality states that $P(\mu-\bar{X}   \ge \varepsilon) \le \exp(-2n\varepsilon^2)$. This may be inverted to provide an upper confidence bound on the expected loss: $P( R(F) \le \hat{R}^+(X_{1:n})) \ge 1 - \delta$, where $\hat{R}^+(X_{1:n}) = \bar{X} + \sqrt{\frac{1}{2n} \log \frac{1}{\delta}}$. More sophisticated concentration inequalities, such as the Hoeffding-Bentkus inequality \citep{bates_distribution-free_2021}, have also been used to control the expected loss.
Methods in this class differ primarily in how they achieve simultaneous control of multiple predictors: RCPS~\citep{bates_distribution-free_2021} does so by assuming monotonicity of the loss with respect to a one-dimensional space of predictors and Learn then Test (LTT) \citep{angelopoulos2021learn} applies a Bonferroni correction to $\delta$ in order to achieve family-wise error rate control \citep[Sec. 9.1]{lehmann2008testing}.

\subsection{Quantile-based Risk Measures}\label{sec:qbrm}
Unlike previous work that considers only expected loss, we consider a class of risk measures known as quantile-based risk measures (QBRMs)~\citep{dowd_after_2006}. Recall that the quantile function is defined as $F^{-1}(p) \triangleq \inf \{ x : F(x) \ge p \}$.
\begin{definition}[Quantile-based Risk Measure]\label{def:qbrm}
Let $\psi(p)$ be a weighting function such that $\psi(p) \ge 0$ and $\int_0^1 \psi(p) \, dp = 1$. The quantile-based risk measure defined by $\psi$ is
\begin{equation}
    R_\psi(F) \triangleq \int_0^1 \psi(p) F^{-1}(p) \, dp.
\end{equation}
\end{definition}
Several representative QBRMs are shown in Table~\ref{tab:qbrm_weight_functions}. The expected loss targets the overall expectation across the data distribution, whereas the VaR targets the maximum loss incurred by the majority of the population excluding a $(1-\beta)$ proportion of the population as outliers. CVaR on the other hand targets the worst-off/highest loss tail of the distribution. Finally, the VaR-Interval is valuable when the precise cutoff for outliers is not known. Other instances of QBRMs, such as spectral risk measures~\citep{acerbi_spectral_2002}, can easily be handled by our framework, but we do not explicitly consider them here.
\begin{table}[h]
    \centering
    \begin{tabular}{ll}
        \toprule
        Risk Measure & Weighting Function $\psi(p)$\\
        \midrule
        Expected loss & $1$  \\
        $\beta$-VaR & $\delta_\beta(p)$ \\\
        $\beta$-CVaR & $\psi(p) = \begin{cases}\frac{1}{1 - \beta}, &p \ge \beta \\ 0, & \text{otherwise} \end{cases}$\\
        $[\beta_\text{min}, \beta_\text{max}]$-VaR-Interval & $\psi(p) = \begin{cases} \frac{1}{\beta_\text{max} - \beta_\text{min}}, & \beta_\text{min} \le p \le \beta_\text{max} \\ 0, &\text{otherwise} \end{cases}$\\
        \bottomrule
    \end{tabular}
    \caption{Several quantile-based risk measures and their corresponding weight functions (see Definition 1). The Dirac delta function centered at $\beta$ is denoted by $\delta_\beta$. See Figure 1 for visualizations of the measures.}
    \label{tab:qbrm_weight_functions}
\end{table}
\section{Quantile Risk Control}

In this section we introduce our framework \emph{Quantile Risk Control} (QRC) for achieving rigorous control of quantile-based risk measures (QBRM). QRC inverts a one-sided goodness-of-fit test statistic to produce a lower confidence bound on the loss CDF. This can subsequently be used to form a family of upper confidence bounds that hold for any QBRM. More formally, specify a confidence level $\delta \in (0, 1)$ and let $X_{(1)} \le \ldots \le X_{(n)}$ denote the order statistics of the validation loss samples. QRC consists of the following high-level steps:
\begin{enumerate}
    \item Choose a one-sided test statistic of the form $S \triangleq \min_{1 \le i \le n} s_i(F(X_{(i)}))$, where $F$ is the (unknown) CDF of $X_1, \ldots, X_n$.
    \item Compute the critical value $s_\delta$ such that $P(S \ge s_\delta) \ge 1 - \delta$.
    \item Construct a CDF lower confidence bound $\hat{F}_n$ defined by coordinates $(X_{(1)}, b_1), \ldots, (X_{(n)}, b_n)$, where $b_i \triangleq s_i^{-1}(s_\delta)$.
    \item For any desired QBRM defined by weighting function $\psi$, report $R_\psi(\hat{F}_n)$ as the upper confidence bound on $R_\psi(F)$.
\end{enumerate}
We show below that in fact $P(R_\psi(F) \le R_\psi(\hat{F}_n)) \ge 1-\delta$ for any QBRM weighting function $\psi$. If the target weighting function is known a priori, this information can be taken into account when choosing the statistic $S$. QRC can also be used to bound the risk of multiple predictors simultaneously by setting the critical value to $\delta' \triangleq \delta / m$, where $m$ is the number of predictors.

\paragraph{Novelty of QRC.} There are two primary novel aspects of QRC. The first is the overall framework that uses nonparametric CDF lower confidence bounds as an underlying representation to simultaneously bound any QBRM. Previous works in distribution-free uncertainty quantification have targeted the mean and VaR, but to our knowledge none have focused on either the CVaR or VaR-interval. On the other hand, \citet{dowd_using_2010} shows how to bound QBRMs given a CDF estimate but does not provide the tools to bound the CDF from a finite data sample. QRC brings these two ideas together by leveraging modern goodness-of-fit test statistics to produce high-quality nonparametric CDF lower bounds from finite data. The second novel aspect of QRC is our proposed forms of a truncated Berk-Jones statistic (Section~\ref{sec:novel_truncated_bj}), which refocus the statistical power of standard Berk-Jones to target either the CVaR (one-sided truncation) or VaR-interval (two-sided truncation).

\subsection{CDF Lower Bounds are Risk Upper Bounds}
\label{sec:cdf_lower_bound_to_quantile}

We begin by establishing that a lower bound $G$ on the true loss CDF $F$ incurred by a predictor can be used to bound $R_\psi(F)$ for any quantile-based risk measure weighting function. For CDFs $F$ and $G$, let $F \succeq G$ denote $F(x) \ge G(x)$ for all $x \in \mathbb{R}$. Proofs may be found in Appendix~\ref{sec:app_proofs}.
\begin{theorem}\label{thm:quantile_bounding}
Let $F$ and $G$ be CDFs such that $F \succeq G$. Then $R_\psi(F) \le R_\psi(G)$ for any weighting function $\psi(p)$ as defined in Definition~\ref{def:qbrm}.
\end{theorem}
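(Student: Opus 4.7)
The plan is to reduce the statement about risk measures to a pointwise comparison of the two quantile functions, and then integrate against the nonnegative weighting function $\psi$. The crucial intermediate claim is:

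\smallskip
\noindent\textbf{Key lemma.} If $F \succeq G$ (pointwise on $\mathbb{R}$), then $F^{-1}(p) \le G^{-1}(p)$ for every $p \in (0,1)$.

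\smallskip
To prove this lemma, I would unfold the definition $F^{-1}(p) = \inf\{x : F(x) \ge p\}$ and exploit the set inclusion $\{x : G(x) \ge p\} \subseteq \{x : F(x) \ge p\}$, which is immediate from $F(x) \ge G(x)$: whenever $G(x) \ge p$, we also have $F(x) \ge G(x) \ge p$. Taking the infimum of a superset can only decrease the value, giving $F^{-1}(p) \le G^{-1}(p)$. This is the step that requires the mild care, since the quantile function is defined via an infimum rather than a true inverse, and one has to verify that the argument still works when the CDFs are only right-continuous and possibly have jumps or flat regions. No continuity assumption on $F$ or $G$ is needed for the set-inclusion argument to go through.

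With the lemma in hand, the rest is immediate. Since $\psi(p) \ge 0$ on $[0,1]$, multiplying the pointwise inequality $F^{-1}(p) \le G^{-1}(p)$ by $\psi(p)$ preserves it, and integrating over $[0,1]$ yields
\begin{equation*}
R_\psi(F) = \int_0^1 \psi(p) F^{-1}(p)\, dp \; \le \; \int_0^1 \psi(p) G^{-1}(p)\, dp = R_\psi(G),
\end{equation*}
which is exactly the desired conclusion. The integrability of $\psi \cdot F^{-1}$ and $\psi \cdot G^{-1}$ does not need to be addressed for a generic bounded loss setting, but if one wishes to be fully rigorous the argument extends verbatim to the extended real line, with the inequality holding in $[-\infty, +\infty]$.

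The main (and essentially only) obstacle is the quantile-function comparison in the lemma; everything after that is a one-line monotonicity-of-integration argument. This proof is clean because the QBRM representation of Definition~\ref{def:qbrm} has already done the heavy lifting of converting a possibly complicated risk measure into a weighted integral of the quantile function, so all the structure of $\psi$ (e.g., the Dirac $\delta_\beta$ for VaR or the indicator tail weight for CVaR) is irrelevant beyond its nonnegativity and normalization.
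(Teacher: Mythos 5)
Your proposal is correct, and it follows the same overall decomposition as the paper: establish the pointwise quantile-function comparison $F^{-1}(p) \le G^{-1}(p)$, then integrate against the nonnegative weight $\psi$. The only difference is how you prove the key lemma. The paper argues via the composition inequalities $G(G^{-1}(p)) \ge p$, then $F(G^{-1}(p)) \ge G(G^{-1}(p)) \ge p$, applies the nondecreasing map $F^{-1}$ to both sides, and finally invokes $F^{-1}(F(x)) \le x$ to extract $G^{-1}(p) \ge F^{-1}(p)$. Your set-inclusion argument --- $\{x : G(x) \ge p\} \subseteq \{x : F(x) \ge p\}$, hence the infimum over the larger set is no bigger --- reaches the same inequality more directly: it needs no auxiliary identities about $F \circ F^{-1}$ or $F^{-1} \circ F$, and (as you note) no right-continuity, whereas the paper's first step $G(G^{-1}(p)) \ge p$ implicitly relies on the infimum defining $G^{-1}(p)$ being attained, which holds for CDFs by right-continuity. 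Both arguments are valid; yours is the more elementary and self-contained route to the same lemma, and the concluding integration step is identical in both.
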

However, as we only have access to a finite sample $X_{1:n}$, it is useful to consider lower confidence bounds (LCB) on the CDF. We call $\hat{F}_n$ a $(1-\delta)$-CDF-LCB if for any $F$, $P(F \succeq \hat{F}_n) \ge 1-\delta,$ where $\hat{F}_n$ is a function of $X_{1:n} \sim^{iid} F$. %
\begin{corollary}\label{thm:quantile_ucb}
Suppose that $\hat{F}_n$ is a $(1-\delta)$-CDF-LCB. Then $P(R_\psi(F) \le R_\psi(\hat{F}_n)) \ge 1 - \delta$.
\end{corollary}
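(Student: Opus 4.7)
The plan is to derive this corollary as a nearly immediate consequence of Theorem~\ref{thm:quantile_bounding}, by passing from the deterministic, pointwise comparison $F \succeq G \Rightarrow R_\psi(F) \le R_\psi(G)$ to a probabilistic statement via the definition of a $(1-\delta)$-CDF-LCB. There is no real analytic content to add beyond Theorem~\ref{thm:quantile_bounding}; the argument is purely a monotonicity-on-an-event manipulation.

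First I would fix the sample $X_{1:n} \sim^{iid} F$ and define the (random) event
\[
E \triangleq \{ \omega : F \succeq \hat{F}_n(\omega) \}.
\]
By the assumption that $\hat{F}_n$ is a $(1-\delta)$-CDF-LCB, we have $P(E) \ge 1-\delta$. Next, on the event $E$, the CDFs $F$ and $G = \hat{F}_n$ satisfy the hypothesis of Theorem~\ref{thm:quantile_bounding}, so applying that theorem pointwise in $\omega \in E$ yields
\[
R_\psi(F) \le R_\psi(\hat{F}_n) \quad \text{on } E.
\]
This gives the set inclusion $E \subseteq \{ R_\psi(F) \le R_\psi(\hat{F}_n) \}$, and monotonicity of $P$ then gives
\[
P\!\left( R_\psi(F) \le R_\psi(\hat{F}_n) \right) \ge P(E) \ge 1-\delta,
\]
which is the desired conclusion.

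The only step that requires even a moment of care is the second one: one should check that $R_\psi(\hat{F}_n)$ is well-defined as a random variable (i.e.\ that the integral $\int_0^1 \psi(p)\, \hat{F}_n^{-1}(p)\, dp$ makes sense for the empirical-style CDF $\hat{F}_n$ produced by QRC), and that Theorem~\ref{thm:quantile_bounding} is being applied for the correct orientation of the inequality — namely, the smaller CDF yields the larger quantile function and hence the larger $R_\psi$. Neither is a genuine obstacle, but they are the two places where a careless write-up could invert a sign. Beyond that, the proof is essentially a one-liner: ``apply Theorem~\ref{thm:quantile_bounding} on the high-probability event guaranteed by the definition of a CDF-LCB.''
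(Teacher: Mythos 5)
Your proposal is correct and is essentially identical to the paper's proof: both define the event $\{F \succeq \hat{F}_n\}$, which has probability at least $1-\delta$ by the CDF-LCB definition, apply Theorem~\ref{thm:quantile_bounding} on that event, and conclude by monotonicity of probability. Your write-up is merely a more explicit rendering of the paper's two-line argument.
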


\subsection{Inverting Goodness-of-fit Statistics to Construct CDF Lower Bounds}

Our technique centers around producing a set of lower confidence bounds on the uniform order statistics, and using these to bound $F$. Let $U_1, \ldots, U_n \sim^{iid} \text{U}(0, 1)$ and let $U_{(1)} \le \ldots \le U_{(n)}$ denote the corresponding order statistics. Consider a one-sided minimum goodness-of-fit (GoF) statistic of the following form:
\begin{equation}\label{eq:minimum_test_statistic}
S \triangleq \min_{1 \le i \le n} s_i(U_{(i)}),
\end{equation}
where $s_1, \ldots,s_n : [0,1] \rightarrow \mathbb{R}$ are right continuous monotone nondecreasing functions. This statistic can be used to determine a set of constraints on a CDF as follows. 
\begin{theorem}\label{thm:anchor_points}
Let $s_\delta=\inf_r\{r: P(S\ge r)\ge 1-\delta\}$, where $\delta \in (0,1)$ and $S$ is defined above. Let $X_1, \ldots, X_n \sim^{iid} F$, where $F$ is an arbitrary CDF and let $X_{(1)} \le \ldots \le X_{(n)}$ be the corresponding order statistics. Then $P(\forall i: F(X_{(i)}) \ge s_i^{-1}(s_\delta)) \ge 1-\delta,$
where $\forall s\in \mathbb{R}$, $s_i^{-1}(s) \triangleq \inf_u \{u : s_i(u) \ge s\}$ is the generalized inverse of $s_i$.
\end{theorem}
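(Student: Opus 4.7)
The plan is to reduce the statement for an arbitrary $F$ to the canonical uniform case via a quantile coupling, and then invert the resulting bound using the right continuity of each $s_i$. First, I would construct $X_i = F^{-1}(U_i)$ for $U_1,\ldots,U_n \sim^{iid} U(0,1)$, where $F^{-1}(u) \triangleq \inf\{x : F(x) \ge u\}$ denotes the generalized inverse of $F$. The standard identity $F(F^{-1}(u)) \ge u$ then gives $F(X_i) \ge U_i$ almost surely; since $F^{-1}$ is monotone nondecreasing, the coupling also preserves ordering, so $X_{(i)} = F^{-1}(U_{(i)})$ and $F(X_{(i)}) \ge U_{(i)}$ a.s.\ for every $i$.

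Next, because each $s_i$ is monotone nondecreasing, I can apply it coordinatewise to obtain $s_i(F(X_{(i)})) \ge s_i(U_{(i)})$, and hence $\tilde S \triangleq \min_i s_i(F(X_{(i)})) \ge S$ almost surely. By the definition of $s_\delta$ we have $P(S \ge s_\delta) \ge 1-\delta$, and this pointwise domination yields $P(\tilde S \ge s_\delta) \ge 1-\delta$, equivalently $P(\forall i: s_i(F(X_{(i)})) \ge s_\delta) \ge 1-\delta$. To close the loop, I would invoke the equivalence $s_i(u) \ge s_\delta \iff u \ge s_i^{-1}(s_\delta)$: the forward direction is immediate from the definition of $s_i^{-1}$ as an infimum, and the reverse direction uses the right continuity of $s_i$ to guarantee $s_i(s_i^{-1}(s_\delta)) \ge s_\delta$ at the infimum itself. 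Substituting this equivalence inside the event yields exactly $P(\forall i: F(X_{(i)}) \ge s_i^{-1}(s_\delta)) \ge 1-\delta$.

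The main technical care lies in handling the two generalized inverses. The quantile coupling must accommodate an arbitrary, possibly atomic, $F$, where $F(F^{-1}(u)) = u$ can fail but the weaker inequality $\ge u$ still holds; this is the single fact that powers the reduction. The inversion of each $s_i$ relies on right continuity in an essential way, since without it the boundary value $u = s_i^{-1}(s_\delta)$ need not satisfy $s_i(u) \ge s_\delta$ and the key equivalence collapses. Once these two standard analytic facts are in hand, the proof is essentially monotone coupling plus pointwise inversion, and it notably avoids any union bound over $i$: the minimum form of the statistic $S$ already packages the simultaneous constraints on all $n$ order statistics into a single scalar inequality.
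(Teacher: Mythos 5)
Your proposal is correct and follows essentially the same route as the paper's proof of the general case: couple $X_{(i)} \stackrel{d}{=} F^{-1}(U_{(i)})$ via the quantile transform, use $F(F^{-1}(u)) \ge u$ to get $F(X_{(i)}) \ge U_{(i)}$, and convert $S \ge s_\delta$ into the per-coordinate constraints via the generalized inverse of $s_i$. (Minor remark: only the forward direction $s_i(u) \ge s_\delta \Rightarrow u \ge s_i^{-1}(s_\delta)$ is actually needed to conclude, so the right-continuity argument for the converse, while correct, is not essential here.)
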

We use the constraints given by Theorem~\ref{thm:anchor_points} to form a $(1-\delta)$-CDF-LCB via conservative completion of the CDF. An illustration of the process can be found in Figure~\ref{fig:cdf_simulation}.

\subsection{Conservative CDF Completion}\label{sec:conservative_completion_details}
Here we show how to use constraints generated by Theorem~\ref{thm:anchor_points} to form a lower bound on $F$ via the so-called conservative completion of the CDF~\citep{learned-miller_new_2020}. This defines the ``lowest lower bound'' of $F$ given constraints.

\begin{theorem}\label{thm:conservative_completion}
Let $F$ be an arbitrary CDF satisfying $F(x_i) \ge b_i$ for all $i \in \{1, \ldots, n\}$, where $x_1 \le \ldots \le x_n$ and $0 \le b_1 \le \ldots \le b_n < 1$. Let us denote $\mathbf{x}=(x_1,x_2,\cdots,x_n)$ and $\mathbf{b}=(b_1,b_2,\cdots,b_n)$ and let $x^+ \in \mathbb{R} \cup \infty$ be an upper bound on $X \sim F$, i.e. $F(x^+) = 1$. Let %

$$G_{(\mathbf{x}, \mathbf{b})}(x) \triangleq \begin{cases}
0, &\text{for } x < x_1 \\
b_1, &\text{for } x_1 \le x < x_2 \\
\ldots \\
b_n, &\text{for } x_n \le x < x+ \\
1, &\text{for } x^+ \le x.
\end{cases}$$
Then, $F \succeq G_{(\mathbf{x}, \mathbf{b})}$.
\end{theorem}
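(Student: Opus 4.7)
The plan is to argue by a straightforward case analysis on where $x$ lies relative to the anchor points $x_1 \le x_2 \le \cdots \le x_n \le x^+$, using only two ingredients: monotonicity of any CDF $F$, and the hypothesized lower bounds $F(x_i) \ge b_i$.

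First I would note that $G_{(\mathbf{x},\mathbf{b})}$ is piecewise constant, partitioning $\mathbb{R}$ into at most $n+2$ half-open intervals on which $G$ takes the values $0, b_1, \ldots, b_n, 1$ respectively. Because $G$ is constant on each piece, it suffices to verify $F(x) \ge G(x)$ on each interval. On $(-\infty, x_1)$ we have $G(x) = 0 \le F(x)$ trivially. On each $[x_i, x_{i+1})$ with $i < n$, monotonicity of $F$ combined with the hypothesis gives
\begin{equation*}
F(x) \ge F(x_i) \ge b_i = G_{(\mathbf{x},\mathbf{b})}(x).
\end{equation*}
The same argument on $[x_n, x^+)$ yields $F(x) \ge b_n$. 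Finally, on $[x^+, \infty)$ we have $G(x) = 1$, and since $F(x^+) = 1$ and $F$ is nondecreasing and bounded above by $1$, $F(x) = 1$ as well.

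The only mild subtleties are potential ties among the $x_i$ and the case $x^+ = \infty$. If $x_i = x_{i+1}$ for some $i$, the corresponding interval $[x_i, x_{i+1})$ is empty, so that plateau of $G$ simply does not appear, and the case analysis collapses onto the distinct anchor points; since the $b_i$ are themselves nondecreasing, the effective constraint at the shared point is the largest $b_i$ among the ties, which is consistent. If $x^+ = \infty$, the interval $[x^+, \infty)$ is empty and the definition of $G$ reduces to the first $n+1$ cases, so no separate argument at $x^+$ is required; the hypothesis $F(x^+) = 1$ becomes the standard limit $\lim_{x\to\infty} F(x)=1$, which is automatic for any CDF.

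There is no real obstacle in this proof; the content is purely definitional once the piecewise structure of $G$ is spelled out. The only thing to be careful about is not to invoke right-continuity or any other regularity of $F$ beyond monotonicity, since the statement is meant to hold for an arbitrary CDF.
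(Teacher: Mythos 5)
Your proof is correct and follows essentially the same interval-by-interval case analysis as the paper's own proof, relying only on monotonicity of $F$ and the hypotheses $F(x_i) \ge b_i$. The extra remarks on ties among the $x_i$ and on $x^+ = \infty$ are sound but not substantively different from what the paper does.
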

\begin{corollary}\label{thm:conservative_completion_cb}
Let $S$ be a test statistic defined as \eqref{eq:minimum_test_statistic} and recall $s_\delta=\inf_r\{r: \text{P}(S\ge r)\ge 1-\delta\}$. Let $X_1, \ldots, X_n \sim^{iid} F$, where $F$ is an arbitrary CDF and let $X_{(1)} \le \ldots \le X_{(n)}$ be the corresponding order statistics. Then with probability at least $1-\delta$, $F \succeq G_{(X_{(1:n)}, s_{1:n}^{-1}(s_\delta))}$, where $X_{(1:n)} \triangleq (X_{(1)}, \ldots, X_{(n)})$ and $s_{1:n}^{-1}(s_\delta) \triangleq (s_1^{-1}(s_\delta), \ldots, s_n^{-1}(s_\delta))$.
\end{corollary}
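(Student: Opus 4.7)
The proof is essentially a two-step composition of the previous results, so my plan is to chain Theorem~\ref{thm:anchor_points} with Theorem~\ref{thm:conservative_completion} and then verify that the hypotheses line up. First I would invoke Theorem~\ref{thm:anchor_points} directly: since $X_{(1)}\le\cdots\le X_{(n)}$ are the order statistics of an i.i.d.\ sample from $F$ and $s_\delta$ is exactly the critical value defined in that theorem, we obtain the high-probability event
\[
E\ \triangleq\ \bigl\{\,\forall i\in\{1,\ldots,n\}:\ F(X_{(i)})\ \ge\ s_i^{-1}(s_\delta)\,\bigr\}, \qquad P(E)\ \ge\ 1-\delta.
\]

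Next, I would argue that on the event $E$ the prerequisites of Theorem~\ref{thm:conservative_completion} are met with $x_i = X_{(i)}$ and $b_i = s_i^{-1}(s_\delta)$. Sortedness of the $x_i$ is automatic. For the $b_i$, each lies in $[0,1)$ because it is a value in the range of a CDF obtainable from a uniform order statistic (and $s_\delta$ is defined so that at least one $U_{(i)}$ can attain $s_i(U_{(i)})=s_\delta$ strictly below $1$). The monotonicity $b_1\le\cdots\le b_n$ is the one point that needs a brief check: although the statement isolates it from the $s_i$'s themselves, for the statistics considered (DKW and Berk--Jones type forms of \eqref{eq:minimum_test_statistic}) the generalized inverses $s_i^{-1}(s_\delta)$ are nondecreasing in $i$ by construction. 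If one wishes to avoid this structural assumption, I would simply replace $b_i$ by $\tilde b_i\triangleq\max_{j\le i}s_j^{-1}(s_\delta)$; because $F$ is monotone the inequalities $F(X_{(i)})\ge\tilde b_i$ still hold on $E$, and $G_{(\mathbf x,\mathbf b)}=G_{(\mathbf x,\tilde{\mathbf b})}$ whenever the $b_i$ were already monotone, so the displayed conclusion is unchanged.

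Having verified the hypotheses, I would apply Theorem~\ref{thm:conservative_completion} pointwise on $E$ to deduce $F\succeq G_{(X_{(1:n)},\,s_{1:n}^{-1}(s_\delta))}$. Since this containment holds on an event of probability at least $1-\delta$, the corollary follows.

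The only real obstacle is the monotonicity of the sequence $b_i=s_i^{-1}(s_\delta)$, because without it the piecewise-constant object $G_{(\mathbf x,\mathbf b)}$ in Theorem~\ref{thm:conservative_completion} is not a valid CDF and the theorem does not apply verbatim. The fix above (taking the running maximum, which is free because $F$ is nondecreasing) handles this without changing the stated bound, so the argument remains a clean corollary of the two preceding theorems.
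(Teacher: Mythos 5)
Your proposal matches the paper's own proof exactly: it chains Theorem~\ref{thm:anchor_points} (to get the event $\{\forall i: F(X_{(i)}) \ge s_i^{-1}(s_\delta)\}$ with probability at least $1-\delta$) with Theorem~\ref{thm:conservative_completion} (to convert those pointwise constraints into $F \succeq G_{(X_{(1:n)}, s_{1:n}^{-1}(s_\delta))}$). Your additional check that the $b_i = s_i^{-1}(s_\delta)$ are nondecreasing --- a hypothesis of Theorem~\ref{thm:conservative_completion} that the paper's proof silently assumes --- is a legitimate and worthwhile refinement, and your running-maximum fix correctly handles the general case without altering the conclusion.
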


Thus we now have a way to construct a lower confidence bound on $F$ given the order statistics of a finite sample from $F$ that is defined by a set of functions $s_1, \ldots, s_n$. In other words, $\hat{F}_n \triangleq G_{(X_{(1:n)}, s_{1:n}^{-1}(s_\delta))}$ is a $(1-\delta)$-CDF-LCB and can be used to select a predictor to minimize a risk measure as in Section~\ref{sec:multiple_predictors}. We also refer to $s_\delta$ as the \textit{critical value} and it can be computed efficiently by bisection using software packages, e.g. \citet{moscovich2020fast}, that compute the one-sided non-crossing probability.%

\subsection{Interpreting Standard Tests of Uniformity as Minimum-type Statistics $S$}
Several standard tests of uniformity may be viewed as instances of the minimum-type statistic $S$ defined in \eqref{eq:minimum_test_statistic}. For example, the %
{\it one-sided Kolmogorov-Smirnov (KS)} statistic with the uniform as the null hypothesis can be expressed as
\begin{equation}
    D_n^+ \triangleq  \max_{1 \le i \le n} \Bigl( \frac{i}{n} - U_{(i)} \Bigr) = -\min_{1 \le i \le n} \Bigl( U_{(i)} - \frac{i}{n} \Bigr),
\end{equation}
which can be viewed as $D_n^+ = -S$, where $s_i(u) = u - \frac{i}{n}$.
The Berk-Jones statistic achieves greater tail sensitivity by 
taking advantage of the fact that the marginal distribution of $U_{(i)}$ is $\Beta(i, n - i + 1)$, which has lower variance in the tails. The {\it one-sided Berk-Jones (BJ)} statistic is defined as
\begin{equation}
    M_n^+ \triangleq \min_{1 \le i \le n} I_{U_{(i)}}(i, n - i + 1),
\end{equation}
where $I_x(a, b)$ is the CDF of $\Beta(a, b)$ evaluated at $x$.
This can be recognized as $S$ where $s_i(u) = I_u(i, n - i + 1)$, the inverse of which can be efficiently computed using standard software packages.

\begin{figure}[t]
\centering
    \includegraphics[width=\textwidth]{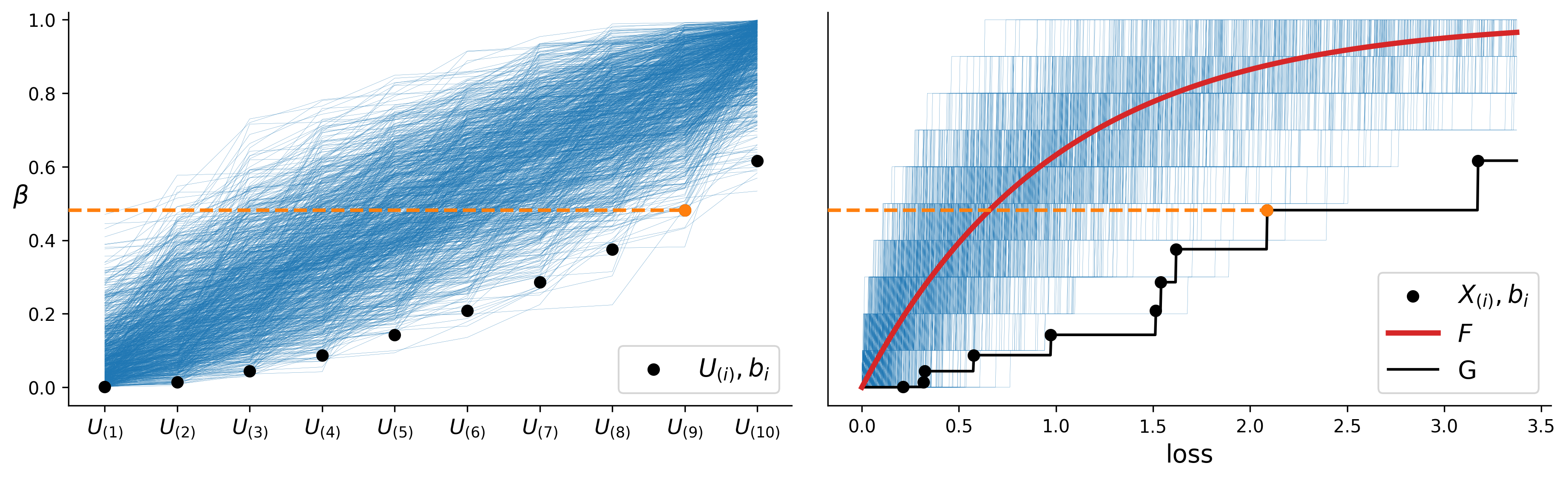}
    \caption{
    Example illustrating the construction of a distribution-free CDF lower bound by bounding order statistics.  On the left, order statistics $U_{(1)}, \ldots, U_{(n)}$ are drawn from a uniform distribution.  On the right, order statistics $X_{(1)}, \ldots, X_{(n)}$ are drawn from an arbitrary distribution (in this case a gamma distribution), and the corresponding Berk-Jones CDF lower confidence bound is shown as $G$.  Our distribution-free method gives bound $b_i$ on each sorted order statistic such that the bound depends only on $i$, as illustrated in the plots for $i=9$ (shown in orange). On the left, 1000 realizations of $U_{(1)}, \ldots, U_{(n)}$ are shown in blue.  On the right, 1000 empirical CDFs $\bar{F}_n(x) \triangleq \frac{1}{n} \sum_{i=1}^n \mathbbm{1}\{ X_i \le x\}$ are shown in blue, and the true CDF $F$ is shown in red.
    }\label{fig:cdf_simulation}
\end{figure}

Standard distribution-free one-sided confidence bounds for quantiles involve selecting the minimal order statistic that bounds the desired quantile level with high probability. 
We use the term {\it Order Stats} to refer to a modified Berk-Jones statistic where only a single order statistic is used. An analogous approach can be taken using the {\it DKW} inequality (see Appendix~\ref{sec:dkw_method} for details). 
These methods can be adapted to handle a VaR interval by simultaneously bounding multiple quantile levels, e.g. using a grid over the interval $[\beta_\text{min}, \beta_\text{max}]$, via Bonferroni correction.

\subsection{Bounding Multiple Predictors Simultaneously}
\label{sec:multiple_predictors}
We now suppose that we have a finite set of predictors $\mathcal{H}$ defined by a set of functions applied to the output of a complex algorithm such as a deep neural network. Let $\phi$ be an arbitrary black box function and let $t_1, \ldots, t_m$ be a set of functions, e.g. different thresholding operations. Then the set of predictors is $\mathcal{H} = \{t_1 \circ \phi, \ldots, t_m \circ \phi\}$. The loss r.v. for a random predictor $h \in \mathcal{H}$ is denoted by $X^h \sim F^h$ and the corresponding validation loss samples are $X^h_{1:n}$.
\begin{theorem}\label{thm:predictor_union_bound}
Suppose that $\hat{F}_n$ is a $(1 - \delta/|\mathcal{H}|)$-CDF-LCB. Then $P(\forall h \in \mathcal{H} : F^h \succeq \hat{F}^h_n) \ge 1 - \delta$.
\end{theorem}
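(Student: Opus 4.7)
The plan is to apply a standard union bound (Bonferroni correction) across the finite family of predictors $\mathcal{H}$. For each fixed $h \in \mathcal{H}$, the hypothesis that $\hat{F}_n$ is a $(1-\delta/|\mathcal{H}|)$-CDF-LCB, applied to the validation samples $X^h_{1:n} \sim^{iid} F^h$, gives
\begin{equation*}
P(F^h \succeq \hat{F}^h_n) \ge 1 - \delta/|\mathcal{H}|,
\end{equation*}
or equivalently $P(F^h \not\succeq \hat{F}^h_n) \le \delta/|\mathcal{H}|$.

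Next I would introduce the bad events $B_h \triangleq \{F^h \not\succeq \hat{F}^h_n\}$ and write
\begin{equation*}
P(\exists h \in \mathcal{H} : F^h \not\succeq \hat{F}^h_n) = P\!\left(\bigcup_{h \in \mathcal{H}} B_h\right) \le \sum_{h \in \mathcal{H}} P(B_h) \le |\mathcal{H}| \cdot \frac{\delta}{|\mathcal{H}|} = \delta.
\end{equation*}
Taking complements yields $P(\forall h \in \mathcal{H} : F^h \succeq \hat{F}^h_n) \ge 1 - \delta$, as required.

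There is essentially no obstacle here: the result is a one-line consequence of Boole's inequality once the definition of a $(1-\delta')$-CDF-LCB is spelled out for each predictor. The only subtlety worth noting explicitly is that no independence assumption is required across $h \in \mathcal{H}$ — the validation samples $X^h_{1:n}$ for different predictors typically share the same underlying draws $(Z_i, Y_i)$ and are thus highly dependent, but the union bound is agnostic to this dependence. For this reason the proof costs a factor of $|\mathcal{H}|$ in the per-predictor confidence level, which is the price paid for family-wise error rate control in the spirit of Learn-then-Test.
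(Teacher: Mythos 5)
Your proof is correct and follows the same route as the paper's: a union bound over the bad events $\{F^h \nsucceq \hat{F}^h_n\}$, each controlled at level $\delta/|\mathcal{H}|$ by the definition of a $(1-\delta/|\mathcal{H}|)$-CDF-LCB, followed by taking complements. Your added remark that no independence across $h \in \mathcal{H}$ is needed is accurate and consistent with the paper's (implicit) use of Boole's inequality.
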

Theorem~\ref{thm:predictor_union_bound} implies that we can simultaneously produce a lower confidence bound on each predictor's loss function with probability at least $1-\delta$, given that $\hat{F}_n$ is a $(1 - \delta/|\mathcal{H}|)$-CDF-LCB.
From the set of predictors $\mathcal{H}$, our framework selects $h^*_\psi$ with respect to a target weighting function $\psi(p)$ as
\begin{equation}
    h^*_\psi \triangleq \argmin_{h \in \mathcal{H}} R_\psi(\hat{F}_n^h).
\end{equation}
By Corollary~\ref{thm:quantile_ucb}, $P(R_{\psi'}(F^{h^*_\psi}) \le R_{\psi'}(\hat{F}_n^{h^*_\psi})) \ge 1 - \delta,$
including $\psi' \neq \psi$. This means that  we are able to bound any quantile-based risk metric, even if it was not targeted during optimization.
\subsection{Novel Truncated Berk-Jones Statistics}\label{sec:novel_truncated_bj}
One of our main contributions concerns novel goodness-of-fit statistics. Here we introduce two forms of a truncated Berk-Jones statistic, targeting a range of quantiles. The first form targets a one-sided range of quantiles $[\beta_\text{min}, 1)$.
The key idea is to drop lower order statistics that do not affect the bound on $F^{-1}(\beta_\text{min})$.
We define the {\it truncated one-sided Berk-Jones} as
    $M_{n,k}^+ \triangleq \min_{k \le i \le n} I_{U_{(i)}}(i, n - i + 1)$,
which can be realized by using $s_i(u) = I_u(i, n - i + 1)$ for $k \le i \le n$ and $s_i(u) = 1$ otherwise. For a given $\beta_\text{min}$, define $k^*(\beta_\text{min}) \triangleq \min \{ k : s_k^{-1}(s^k_\delta) \ge \beta_\text{min} \}$, where $s^k_\delta$ is the critical value of $M^+_{n,k}$. Bisection search can be used to compute $k^*$ and thus the inversion of $M^+_{n,k^*(\beta_\text{min})}$ provides a CDF lower bound targeted at quantiles $\beta_\text{min}$ and above.
The {\it truncated two-sided Berk-Jones} variant targets a quantile interval $[\beta_\text{min}, \beta_\text{max}]$ and is defined as $M^+_{n,k,\ell} \triangleq \min_{k \le i \le \ell} I_{U_{(i)}}(i, n - i + 1)$. It first computes $k^*(\beta_\text{min})$ as in the one-sided case and then removes higher order statistics using the upper endpoint of $\ell^* \triangleq \min \{ \ell : s_\ell^{-1}(s_\delta^{k^*,\ell}) \ge \beta_\text{max} \}$, where $s_\delta^{k^*, \ell}$ is the corresponding critical value.

\section{Experiments}\label{sec:experiments-2}

We perform experiments to investigate our Quantile Risk Control framework as well as our particular novel bounds.  First, we test the value of the flexibility offered by QRC by studying the consequences of optimizing for a given target metric when selecting a predictor.  This motivates the need for strong bounds on a rich set of QBRMs, and thus we next examine the strength of our novel truncated Berk-Jones method for bounding quantities like the VaR Interval and CVaR.  Finally, we present experimental results across the complete canonical set of metrics (Mean, VaR, Interval, and CVaR) in order to understand the effects of applying the full Quantitative Risk Control framework.

Our comparisons are made using the point-wise methods: Learn Then Test-Hoeffding Bentkus ({\bf LttHB}); RCPS with a Waudby-Smith-Ramdas bound ({\bf RcpsWSR}); {\bf DKW}; and Order Statistics ({\bf Order Stats}). We also compare goodness of fit-based methods: Kolmogorov-Smirnov ({\bf KS}); Berk-Jones ({\bf BJ}); and truncated Berk-Jones ({\bf One-sided BJ, Two-sided BJ}); all as described in previous sections.

Our experiments span a diverse selection of real-world datasets and tasks.  First, we investigate our framework in the context of common image tasks using state of the art computer vision models and the MS-COCO \citep{lin2014coco} and CIFAR-100 \citep{Krizhevsky2009LearningML} datasets.  MS-COCO is a multi-label object detection dataset, where the presence of 80 object classes are detected and more than one class may be present in a single instance.  CIFAR-100 is a single-label image recognition task, which we perform in the challenging zero-shot classification setting using CLIP \citep{radford2021}.  In addition, we conduct a natural language processing experiment using the Go Emotions \citep{demszky2020} dataset and a fine-tuned BERT model \citep{devlin2018bert}, where the goal is to recognize emotion in text and a single instance may have multiple labels.  Finally, we evaluate the interaction between our method and subgroup fairness using the UCI Nursery dataset \citep{Dua2019uci}, where applicants are ranked for admissions to school.

Full details for all datasets, models, and experiments can be found in Appendix~\ref{sec:app_exp_details}.  All experiments focus on producing prediction sets, as is typical in the distribution-free uncertainty quantification (DFUQ) setting~\citep{vovk2005algorithmic,shafer2008tutorial,bates_distribution-free_2021,angelopoulos2021learn}, and are run for 1000 random trials with $\delta=0.05$. Code for our experiments is publicly available\footnote{\url{https://github.com/jakesnell/quantile-risk-control}}.

\subsection{Value of Optimizing for Target Metric}\label{sec:val_opt_tar}

A main feature of the QRC framework is the flexibility to target a range of metrics, including: Expected-value ({\bf Mean}); $\beta$-VaR ({\bf VaR}); $[\beta_\text{min}, \beta_\text{max}]$-VaR Interval ({\bf VaR Interval}); and {\bf CVaR}.  The results presented here demonstrate the importance of that flexibility by showing that in order to produce the best bound on a given risk measure, it is important to optimize for that metric when selecting a predictor.

Using the MS-COCO, CIFAR-100, and Go Emotions classification tasks, we record the guaranteed Mean, VaR, VaR Interval, and CVaR given by the Berk-Jones bound when targeting each of these risk measures to choose a predictor.  The bound is applied on top of a pre-trained model that produces a per-class score, i.e. $W = \phi(Z)$, where $W \in \mathbb{R}^K$ and $K$ is the number of classes. A prediction set is formed by selecting all classes with a score at least some value $t$: $\hat{Y} = \{k : W_k \ge t\}$.  Therefore a predictor is described by its corresponding threshold, and our algorithms choose the predictor that minimizes the target metric according to the bound produced by a given method.  See Appendix~\ref{sec:predictor_space} for more details on the predictor space.  Performance is measured using a balanced accuracy metric, which evenly combines sensitivity and specificity measures, where sensitivity encourages larger prediction set sizes and specificity encourages smaller set sizes (see Appendix~\ref{sec:loss_def} for an exact description).

Refer to Table \ref{tab:coco_multi_results} for MS-COCO results and Appendix Table \ref{tab:app_multi_results} for CIFAR-100 and Go Emotions results.  Targeting a given metric in predictor selection gives the tightest bounds on that metric in all experiments.  This highlights the significance of our QRC framework, which produces bounds over predictors for a diverse set of targeted risk measures such as the VaR Interval or CVaR.

\begin{table}[!ht]
    \centering
    \begin{tabular}{llcccc}
        \toprule
           Dataset & Target & Mean & VaR & Interval & CVaR \\ 
        \midrule
            MS-COCO & Mean & \textbf{0.088 ± 0.003} & 0.203 ± 0.014 & 0.211 ± 0.011 & 0.472 ± 0.017 \\
            ~ & VaR & 0.101 ± 0.006 & \textbf{0.179 ± 0.010} & 0.191 ± 0.011 & 0.460 ± 0.017 \\
            ~ &  Interval & 0.100 ± 0.007 & 0.181 ± 0.011 & \textbf{0.189 ± 0.011} & 0.458 ± 0.017 \\
            ~ &  CVaR & 0.107 ± 0.014 & 0.190 ± 0.016 & 0.197 ± 0.015 & \textbf{0.453 ± 0.015} \\
        \bottomrule
    \end{tabular}
    \caption{
    Results on MS-COCO  illustrating trade-offs between targeting different metrics in predictor selection and the guarantees that can be issued for each metric given by the selected bound.  We use the Berk-Jones bound, and bold results are best for a given metric.
    }
    \label{tab:coco_multi_results}
\end{table}

\subsection{Truncated Berk-Jones vs. Existing Methods}

Next, we compare the strength of our truncated Berk-Jones bounds for the VaR-Interval and CVaR to existing methods for bounding these quantities for hypothesis selection.  As opposed to the previous experiment, here we fix a single threshold prior to running the experiment, in order to isolate the strength of the bounds from the randomness induced by multiple predictors.  Results are shown in Figures~\ref{fig:fixed_pred_full} and \ref{fig:nursery_new}, with further explanation below.  The truncated Berk-Jones method consistently gives the best bounds, especially on the CVaR.

\subsubsection{Image and Text Classification}\label{sec:img_text_cls}

Our novel truncated Berk-Jones bounds can be used to control the loss for a diverse range of tasks, datasets, and loss functions.  Here we examine the strength of these bounds for bounding quantities of the VaR on multi-label image (MS-COCO) and text (Go Emotions) classification tasks, as well as single-label zero-shot image classification (CIFAR-100).  For MS-COCO object detection, VaR-Interval is evaluated in $[0.85,0.95]$, and CVaR is calculated with $\beta=0.9$.  For zero-shot classification on CIFAR-100 and emotion recognition in Go Emotions, the VaR-Interval is bounded in  $[0.6,0.9]$.  Results are shown in Figure~\ref{fig:fixed_pred_full}.  Across all tasks and datasets, the modified Berk-Jones bound consistently gives a loss guarantee that is lowest, and thus closest to the actual loss for this fixed predictor.

\begin{figure}
	\centering
	\includegraphics[width=0.45\linewidth]{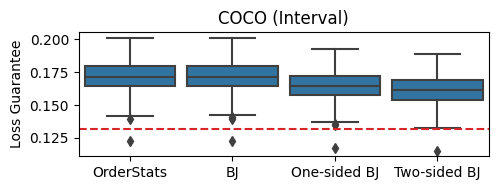}
	\includegraphics[width=0.45\linewidth]{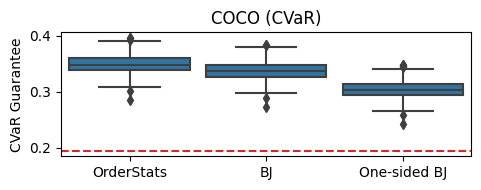}
	\includegraphics[width=0.45\linewidth]{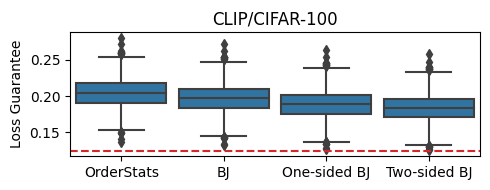}
	\includegraphics[width=0.45\linewidth]{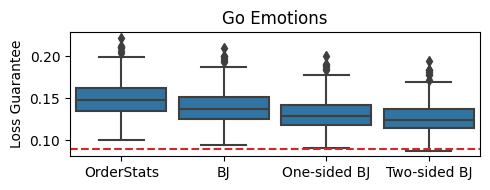}
	\caption{Results for bounding the quantities of the VaR for various methods.  The average test loss over all trials is plotted in red.}%
\label{fig:fixed_pred_full}
\end{figure}

\subsubsection{Fair Classification}

\citet{williamson2019fairness} proposed CVaR as a notion of fairness that can be viewed as the worst-case mean loss for subgroups of a particular size.  Thus we consider a fairness problem, and show how our framework can be applied to generate a predictor that optimizes CVaR, or worst-case outcomes, individually with respect to each group according to some protected attribute.  We employ the UCI Nursery dataset \citep{Dua2019uci}, a multi-class classification problem where categorical features about families are used to assign a predicted rank to an applicant $Y \in \{1,...,5\}$ ranging from ``not recommended'' ($Y=1$) to ``special priority'' ($Y=5$).  We measure a class-dependent weighted accuracy loss.  Our sensitive attribute is a family's financial status. Applicants are separated based on this sensitive attribute, and we apply the methods to and evaluate each group (``convenient'' and ``inconvenient'') separately.  Exact specifications of the loss function, dataset, and other experiments details can be found in Appendix~\ref{sec:fairness_details}.  Results for the fair classification experiment are reported in Figure~\ref{fig:nursery_new}.  Applying the truncated Berk-Jones bound individually to each group leads to the best CVaR among all methods for both groups.

\begin{figure}
	\centering
	\includegraphics[width=0.45\linewidth]{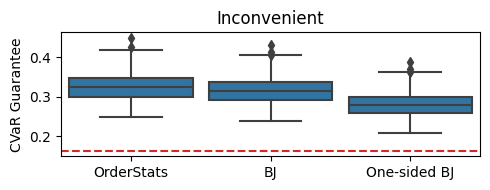}
	\includegraphics[width=0.45\linewidth]{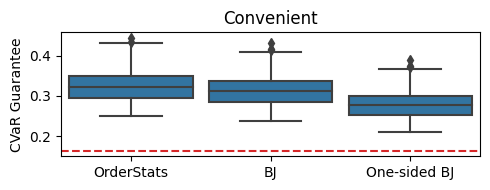}
	\caption{Results for bounding the CVaR for each group in Nursery with $\beta=0.9$.}%
\label{fig:nursery_new}
\end{figure}

\subsection{MS-COCO Experiments on all Risk Measures}\label{sec:coco_all}

For our final experiment we compare the performance of various bounding methods across the full canonical set of metrics (Mean, VaR, Interval, and CVaR) when selecting a predictor.  We perform this experiment using MS-COCO and a balanced accuracy risk measure, and apply a Bonferroni correction to $\delta$ according to the size of the predictor space.

All results for these MS-COCO experiments are reported in Table~\ref{tab:coco_all_results_new}.  Table~\ref{tab:coco_all_results_new} includes the target metric used to choose a predictor, bounding method, guarantee returned on the validation set, and loss quantity observed on the test set.  As expected, RcpsWSR gives the tightest bound on the Mean, while the point-wise Order Statistics method outperforms the GoF-based methods on bounding a single VaR metric.  When bounding a VaR Interval or the CVaR, which is a focus of our work, the truncated Berk-Jones bounds achieves the lowest loss guarantee.  In general, the best guarantees have best actual loss, although the difference can be small (or none) as all methods may choose the same or similar predictors around that which minimizes the target risk measure on the validation set.

\begin{table}[!ht]
    \centering
    \begin{small}
    \begin{tabular}{llccllcc}
    \toprule
        & Method & Guarantee & Actual &  & Method & Guarantee & Actual \\ 
        \midrule
        \multirow{6}{*}{\rotatebox[origin=c]{90}{Mean}}
        & LttHB & 0.097 ± 0.004 & 0.048 ± 0.001  &
        \multirow{6}{*}{\rotatebox[origin=c]{90}{Interval}}
        & DKW & 0.744 ± 0.005 & 0.146 ± 0.012 \\ 
        ~ & RcpsWSR & \textbf{0.055 ± 0.003} & 0.048 ± 0.001 & & OrderStats & 0.189 ± 0.011 & 0.138 ± 0.007  \\
        ~ & KS & 0.142 ± 0.003 & 0.048 ± 0.001 & & KS & 0.581 ± 0.008 & 0.144 ± 0.013 \\ 
        ~ & BJ & 0.088 ± 0.003 & 0.048 ± 0.001 & & BJ & 0.189 ± 0.011 & 0.138 ± 0.007 \\ 
        & & &  & & One-sided BJ & 0.183 ± 0.010 & \textbf{0.137 ± 0.006} \\
        & & & &  & Two-sided BJ & \textbf{0.182 ± 0.010} & \textbf{0.137 ± 0.006} \\
        \midrule
        \multirow{6}{*}{\rotatebox[origin=c]{90}{VaR}}
        & LttHB & 0.968 ± 0.036 & 0.154 ± 0.01  &
        \multirow{6}{*}{\rotatebox[origin=c]{90}{CVaR}}
        & DKW & 1.0 ± 0.0 & 0.500 ± 0.000 \\ 
        ~ & DKW & 0.287 ± 0.023 & 0.150 ± 0.020 & &  OrderStats & 0.454 ± 0.015 & 0.208 ± 0.011\\ 
        ~ & OrderStats & \textbf{0.166 ± 0.010} & \textbf{0.132 ± 0.004} & & KS & 0.971 ± 0.002 & 0.228 ± 0.022\\ 
        ~ & KS & 0.287 ± 0.023 & 0.150 ± 0.020 & & BJ & 0.453 ± 0.015 & 0.208 ± 0.011\\ 
        ~ & BJ & 0.179 ± 0.010 & 0.134 ± 0.006 & & One-sided BJ & \textbf{0.419 ± 0.015} & \textbf{0.207 ± 0.010}\\ 
        \bottomrule
    \end{tabular}
    \end{small}
   \caption{Results for experiments with MS-COCO Object Detection and all canonical metrics.  Best results in bold, except for actual mean where all methods produced the same loss.}
    \label{tab:coco_all_results_new}
\end{table}

\subsection{Violation of Bounds}

While all methods are theoretically guaranteed, bounds on the true CDF or QBRM may be violated with some probability determined by $\delta$.  Our experiments lead to several observations about these violations.  First, CDF lower bounds are violated more frequently than specific risk measures.  Next, since the considered methods are distribution-free, the tightness of the bounds (and thus the frequency of violations) depends on the particular loss distribution we encounter.  Finally, bounding multiple predictors simultaneously requires a statistical correction that reduces the frequency of violations.  Numerical results supporting these observations as well as more discussion appear in Appendix~\ref{sec:app_violations}.

\section{Discussion}\label{sec:discussion}

Our experimental results (see Section~\ref{sec:experiments-2} and Appendix~\ref{sec:more_results}) show that: (1) in order to achieve a tight bound on a target risk measure, a predictor should be selected based on that risk measure; and (2) the truncated Berk-Jones bounds are more effective than previous methods for bounding the VaR-Interval and CVaR.  

While methods that focus on the mean or a particular $\beta$-VaR value can give a tighter bound when that target metric is known beforehand, a key advantage of Quantile Risk Control comes from the flexibility to change the target metric post hoc, within some range, and still maintain a valid bound.  While there are some ways to convert between these classes (e.g., Mean to Quantile with Markov's Inequality; CVaR is directly an upper bound on VaR), these are weak as shown by our experiments.  This makes QRC well-suited to scenarios in which a stakeholder may want to issue a guarantee or several guarantees from a single predictor, or when an external auditor may change system reporting requirements.

A natural extension of this work is to consider returning multiple predictors with our algorithm. In this case, while the target metric may not be known beforehand, we could produce multiple predictors that optimally bound specific points in the range, instead of a single predictor that gives the best bound on average over the range.

\section*{Acknowledgments}
The authors would like to thank Marc-Etienne Brunet, Anastasios Angelopoulos, and anonymous reviewers for their useful comments. Resources used in preparing this research were provided, in part, by the Province of Ontario, the Government of Canada through CIFAR, and companies sponsoring the Vector Institute (\url{https://vectorinstitute.ai/partners/}). This project is also supported by NSERC and Simons Foundation Collaborative Research Grant 733782.

\bibliography{references,ref2}

\begin{thebibliography}{34}
\providecommand{\natexlab}[1]{#1}
\providecommand{\url}[1]{\texttt{#1}}
\expandafter\ifx\csname urlstyle\endcsname\relax
  \providecommand{\doi}[1]{doi: #1}\else
  \providecommand{\doi}{doi: \begingroup \urlstyle{rm}\Url}\fi

\bibitem[Acerbi(2002)]{acerbi_spectral_2002}
Carlo Acerbi.
\newblock Spectral measures of risk: {A} coherent representation of subjective
  risk aversion.
\newblock \emph{Journal of Banking \& Finance}, 26\penalty0 (7):\penalty0
  1505--1518, July 2002.

\bibitem[Anderson and Darling(1952)]{anderson1952asymptotic}
Theodore~W. Anderson and Donald~A. Darling.
\newblock Asymptotic theory of certain ``goodness of fit'' criteria based on
  stochastic processes.
\newblock \emph{The Annals of Mathematical Statistics}, pages 193--212, 1952.

\bibitem[Angelopoulos and Bates(2022)]{angelopoulos2022gentle}
Anastasios~N. Angelopoulos and Stephen Bates.
\newblock A {{Gentle Introduction}} to {{Conformal Prediction}} and
  {{Distribution-Free Uncertainty Quantification}}.
\newblock \emph{arXiv:2107.07511}, January 2022.

\bibitem[Angelopoulos et~al.(2021)Angelopoulos, Bates, Cand{\`e}s, Jordan, and
  Lei]{angelopoulos2021learn}
Anastasios~N. Angelopoulos, Stephen Bates, Emmanuel~J. Cand{\`e}s, Michael~I.
  Jordan, and Lihua Lei.
\newblock Learn then {{Test}}: {{Calibrating Predictive Algorithms}} to
  {{Achieve Risk Control}}.
\newblock \emph{arXiv:2110.01052}, November 2021.

\bibitem[Bates et~al.(2021)Bates, Angelopoulos, Lei, Malik, and
  Jordan]{bates_distribution-free_2021}
Stephen Bates, Anastasios Angelopoulos, Lihua Lei, Jitendra Malik, and
  Michael~I. Jordan.
\newblock Distribution-{{Free}}, {{Risk-Controlling Prediction Sets}}.
\newblock \emph{arXiv:2101.02703}, August 2021.

\bibitem[Berk and Jones(1979)]{berk_goodness--fit_1979}
Robert~H. Berk and Douglas~H. Jones.
\newblock Goodness-of-fit test statistics that dominate the {Kolmogorov}
  statistics.
\newblock \emph{Zeitschrift für Wahrscheinlichkeitstheorie und Verwandte
  Gebiete}, 47\penalty0 (1):\penalty0 47--59, 1979.

\bibitem[Chandak et~al.(2021)Chandak, Niekum, da~Silva, Learned-Miller,
  Brunskill, and Thomas]{chandak_universal_2021}
Yash Chandak, Scott Niekum, Bruno da~Silva, Erik Learned-Miller, Emma
  Brunskill, and Philip~S. Thomas.
\newblock Universal {Off}-{Policy} {Evaluation}.
\newblock In \emph{Advances in {Neural} {Information} {Processing} {Systems}},
  volume~34, pages 27475--27490. Curran Associates, Inc., 2021.

\bibitem[Cram{\'e}r(1928)]{cramer1928composition}
Harald Cram{\'e}r.
\newblock {On the composition of elementary errors: First paper: Mathematical
  deductions}.
\newblock \emph{Scandinavian Actuarial Journal}, 1928\penalty0 (1):\penalty0
  13--74, January 1928.

\bibitem[Demszky et~al.(2020)Demszky, Movshovitz-Attias, Ko, Cowen, Nemade, and
  Ravi]{demszky2020}
Dorottya Demszky, Dana Movshovitz-Attias, Jeongwoo Ko, Alan Cowen, Gaurav
  Nemade, and Sujith Ravi.
\newblock {G}o{E}motions: A dataset of fine-grained emotions.
\newblock In \emph{Proceedings of the 58th Annual Meeting of the Association
  for Computational Linguistics}, pages 4040--4054. Association for
  Computational Linguistics, July 2020.

\bibitem[Devlin et~al.(2018)Devlin, Chang, Lee, and Toutanova]{devlin2018bert}
Jacob Devlin, Ming-Wei Chang, Kenton Lee, and Kristina Toutanova.
\newblock Bert: Pre-training of deep bidirectional transformers for language
  understanding.
\newblock \emph{arXiv:1810.04805}, 2018.

\bibitem[Dowd(2010)]{dowd_using_2010}
Kevin Dowd.
\newblock Using {Order} {Statistics} to {Estimate} {Confidence} {Intervals} for
  {Quantile}-{Based} {Risk} {Measures}.
\newblock \emph{Journal of Derivatives}, 17\penalty0 (3):\penalty0 9--14, 2010.

\bibitem[Dowd and Blake(2006)]{dowd_after_2006}
Kevin Dowd and David Blake.
\newblock After {VaR}: {The} {Theory}, {Estimation}, and {Insurance}
  {Applications} of {Quantile}-{Based} {Risk} {Measures}.
\newblock \emph{Journal of Risk \& Insurance}, 73\penalty0 (2):\penalty0
  193--229, June 2006.

\bibitem[Dua and Graff(2017)]{Dua2019uci}
Dheeru Dua and Casey Graff.
\newblock {UCI} machine learning repository, 2017.
\newblock URL \url{http://archive.ics.uci.edu/ml}.

\bibitem[Dvoretzky et~al.(1956)Dvoretzky, Kiefer, and
  Wolfowitz]{dvoretzky1956asymptotic}
Aryeh Dvoretzky, Jack Kiefer, and Jacob Wolfowitz.
\newblock Asymptotic minimax character of the sample distribution function and
  of the classical multinomial estimator.
\newblock \emph{The Annals of Mathematical Statistics}, pages 642--669, 1956.

\bibitem[Krizhevsky(2009)]{Krizhevsky2009LearningML}
Alex Krizhevsky.
\newblock Learning multiple layers of features from tiny images.
\newblock Technical report, University of Toronto, Department of Computer
  Science, 2009.

\bibitem[{Learned-Miller} and Thomas(2020)]{learned-miller_new_2020}
Erik {Learned-Miller} and Philip~S. Thomas.
\newblock A {{New Confidence Interval}} for the {{Mean}} of a {{Bounded Random
  Variable}}.
\newblock \emph{arXiv:1905.06208}, November 2020.

\bibitem[Lee et~al.(2020)Lee, Park, and Shin]{lee2020learning}
Jaeho Lee, Sejun Park, and Jinwoo Shin.
\newblock Learning bounds for risk-sensitive learning.
\newblock In H.~Larochelle, M.~Ranzato, R.~Hadsell, M.F. Balcan, and H.~Lin,
  editors, \emph{Advances in Neural Information Processing Systems}, volume~33,
  pages 13867--13879. {Curran Associates, Inc.}, 2020.

\bibitem[Lehmann and Romano(2008)]{lehmann2008testing}
Erich~L. Lehmann and Joseph~P. Romano.
\newblock \emph{Testing {{Statistical Hypotheses}}}.
\newblock Springer Texts in Statistics. {Springer New York}, {New York, NY},
  2008.
\newblock ISBN 978-1-4419-3178-8.

\bibitem[Lin et~al.(2014)Lin, Maire, Belongie, Hays, Perona, Ramanan,
  Doll{\'a}r, and Zitnick]{lin2014coco}
Tsung-Yi Lin, Michael Maire, Serge Belongie, James Hays, Pietro Perona, Deva
  Ramanan, Piotr Doll{\'a}r, and C.~Lawrence Zitnick.
\newblock Microsoft {{COCO}}: {{Common Objects}} in {{Context}}.
\newblock In David Fleet, Tomas Pajdla, Bernt Schiele, and Tinne Tuytelaars,
  editors, \emph{Computer {{Vision}} \textendash{} {{ECCV}} 2014}, volume 8693,
  pages 740--755. {Springer International Publishing}, 2014.

\bibitem[Massart(1990)]{massart1990tight}
P.~Massart.
\newblock The {{Tight Constant}} in the {{Dvoretzky-Kiefer-Wolfowitz
  Inequality}}.
\newblock \emph{The Annals of Probability}, 18\penalty0 (3):\penalty0
  1269--1283, 1990.

\bibitem[Massey(1951)]{massey1951kolmogorov}
Frank~J. Massey, Jr.
\newblock The {K}olmogorov-{S}mirnov test for goodness of fit.
\newblock \emph{Journal of the American Statistical Association}, 46\penalty0
  (253):\penalty0 68--78, 1951.

\bibitem[Moscovich(2020)]{moscovich2020fast}
Amit Moscovich.
\newblock Fast calculation of p-values for one-sided {K}olmogorov-{S}mirnov
  type statistics.
\newblock \emph{arXiv:2009.04954}, 2020.

\bibitem[Moscovich et~al.(2016)Moscovich, Nadler, and
  Spiegelman]{moscovich2016exact}
Amit Moscovich, Boaz Nadler, and Clifford Spiegelman.
\newblock On the exact {B}erk-{J}ones statistics and their $ p $-value
  calculation.
\newblock \emph{Electronic Journal of Statistics}, 10\penalty0 (2):\penalty0
  2329--2354, 2016.

\bibitem[Naaman(2021)]{naaman2021tight}
Michael Naaman.
\newblock On the tight constant in the multivariate
  {D}voretzky–{K}iefer–{W}olfowitz inequality.
\newblock \emph{Statistics \& Probability Letters}, 173:\penalty0 109088, 2021.

\bibitem[Radford et~al.(2021)Radford, Kim, Hallacy, Ramesh, Goh, Agarwal,
  Sastry, Askell, Mishkin, Clark, Krueger, and Sutskever]{radford2021}
Alec Radford, Jong~Wook Kim, Chris Hallacy, Aditya Ramesh, Gabriel Goh,
  Sandhini Agarwal, Girish Sastry, Amanda Askell, Pamela Mishkin, Jack Clark,
  Gretchen Krueger, and Ilya Sutskever.
\newblock Learning transferable visual models from natural language
  supervision.
\newblock In Marina Meila and Tong Zhang, editors, \emph{Proceedings of the
  38th International Conference on Machine Learning}, volume 139 of
  \emph{Proceedings of Machine Learning Research}, pages 8748--8763. PMLR, July
  2021.

\bibitem[Ridnik et~al.(2021)Ridnik, Lawen, Noy, Ben~Baruch, Sharir, and
  Friedman]{ridnik2020tresnet}
Tal Ridnik, Hussam Lawen, Asaf Noy, Emanuel Ben~Baruch, Gilad Sharir, and
  Itamar Friedman.
\newblock Tresnet: High performance gpu-dedicated architecture.
\newblock In \emph{Proceedings of the IEEE/CVF Winter Conference on
  Applications of Computer Vision (WACV)}, pages 1400--1409, January 2021.

\bibitem[Rockafellar and Uryasev(2000)]{rockafellar_optimization_2000}
R.~Tyrrell Rockafellar and Stanislav Uryasev.
\newblock Optimization of conditional value-at-risk.
\newblock \emph{The Journal of Risk}, 2\penalty0 (3):\penalty0 21--41, 2000.

\bibitem[Shafer and Vovk(2008)]{shafer2008tutorial}
Glenn Shafer and Vladimir Vovk.
\newblock A tutorial on conformal prediction.
\newblock \emph{Journal of Machine Learning Research}, 9\penalty0
  (12):\penalty0 371--421, 2008.

\bibitem[Shorack(2000)]{shorack_probability_2000}
Galen Shorack.
\newblock \emph{Probability for {Statisticians}}.
\newblock Springer {Texts} in {Statistics}. Springer-Verlag, New York, 2000.
\newblock ISBN 978-0-387-98953-2.

\bibitem[Shorack and Wellner(2009)]{shorack_empirical_2009}
Galen~R. Shorack and Jon~A. Wellner.
\newblock \emph{Empirical {Processes} with {Applications} to {Statistics}}.
\newblock Society for Industrial and Applied Mathematics, January 2009.
\newblock ISBN 978-0-89871-684-9.

\bibitem[Vovk et~al.(2005)Vovk, Gammerman, and Shafer]{vovk2005algorithmic}
Vladimir Vovk, Alex Gammerman, and Glenn Shafer.
\newblock \emph{Algorithmic learning in a random world}.
\newblock Springer Science \& Business Media, 2005.

\bibitem[Waymel et~al.(2019)Waymel, Badr, Demondion, Cotten, and
  Jacques]{waymel_impact_2019}
Q.~Waymel, S.~Badr, X.~Demondion, A.~Cotten, and T.~Jacques.
\newblock Impact of the rise of artificial intelligence in radiology: {What} do
  radiologists think?
\newblock \emph{Diagnostic and Interventional Imaging}, 100\penalty0
  (6):\penalty0 327--336, June 2019.

\bibitem[Williamson and Menon(2019)]{williamson2019fairness}
Robert Williamson and Aditya Menon.
\newblock Fairness risk measures.
\newblock In Kamalika Chaudhuri and Ruslan Salakhutdinov, editors,
  \emph{Proceedings of the 36th International Conference on Machine Learning},
  volume~97 of \emph{Proceedings of Machine Learning Research}, pages
  6786--6797. {PMLR}, June 2019.

\bibitem[Wolf et~al.(2020)Wolf, Debut, Sanh, Chaumond, Delangue, Moi, Cistac,
  Rault, Louf, Funtowicz, Davison, Shleifer, von Platen, Ma, Jernite, Plu, Xu,
  Scao, Gugger, Drame, Lhoest, and Rush]{wolf-etal-2020-transformers}
Thomas Wolf, Lysandre Debut, Victor Sanh, Julien Chaumond, Clement Delangue,
  Anthony Moi, Pierric Cistac, Tim Rault, Rémi Louf, Morgan Funtowicz, Joe
  Davison, Sam Shleifer, Patrick von Platen, Clara Ma, Yacine Jernite, Julien
  Plu, Canwen Xu, Teven~Le Scao, Sylvain Gugger, Mariama Drame, Quentin Lhoest,
  and Alexander~M. Rush.
\newblock Transformers: State-of-the-art natural language processing.
\newblock In \emph{Proceedings of the 2020 Conference on Empirical Methods in
  Natural Language Processing: System Demonstrations}, pages 38--45.
  Association for Computational Linguistics, October 2020.

\end{thebibliography}

\appendix
\section{Proofs} \label{sec:app_proofs}

\subsection{Proof of Theorem~\ref{thm:quantile_bounding}}\label{sec:proof_quantile_bounding}
\begin{proof}
Consider $G^{-1}(p)$. By the definition of the quantile function, $G(G^{-1}(p)) \ge p$. By the relationship between $F$ and $G$, we therefore have $F(G^{-1}(p)) \ge G(G^{-1}(p)) \ge p$, where the first inequality follows from $F\succeq G$. Applying $F^{-1}$ to both sides yields $F^{-1}(F(G^{-1}(p))) \ge F^{-1}(p)$. But $x \ge F^{-1} \circ F(x)$ (see e.g. Proposition 3 on p. 6 of \citet{shorack_empirical_2009}) and thus $G^{-1}(p) \ge F^{-1}(p)$. Then we have $R_\psi(F) \le R_\psi(G)$ by Definition~\ref{def:qbrm}.
\end{proof}

\subsection{Proof of Corollary~\ref{thm:quantile_ucb}}\label{sec:proof_quantile_ucb}
\begin{proof}
The event that $F \succeq \hat{G}(X_{1:n})$ implies that $R_\psi(F) \le R_\psi(G)$ by Theorem~\ref{thm:quantile_bounding}. Since the antecedent holds with probability at least $1 - \delta$, so must the consequent.
\end{proof}

\subsection{Proof of Theorem~\ref{thm:anchor_points}}\label{sec:proof_anchor_points}
\begin{proof}
For readability, we first discuss the case when the CDF $F$ is continuous and strictly increasing. After that, we demonstrate how to extend our argument to a more general case.

\paragraph{Continuous and strictly increasing $F$.} If $F$ is continuous and strictly increasing, we know $F$ is bijective and  the inverse function $F^{-1}(p)$ is well-defined. Moreover, for $X\sim F$, we have $F(X)$ is of the same distribution as uniform random variable $U$. That is because 
$$P(F(X)\le p )= P(X\le F^{-1}(p) )=p.$$
In addition, by the monotonicity of $F$, $F$ preserves ordering, which means $F(X_{(i)})$ is of the same distribution of $U_{(i)}$ for all $i\in[n]$. So, we have 
$$P\left(\forall i,~F(X_{(i)}) \ge s_i^{-1}(s_\delta)\right)= P\left(\forall i,~U_{(i)} \ge s_i^{-1}(s_\delta)\right).$$

Now recall that 
\begin{equation*}
S \triangleq \min_{1 \le i \le n} s_i(U_{(i)}).
\end{equation*}
Given the definition of $s_\delta$, we know that
$$P\left(S \ge s_\delta\right)\ge 1-\delta.$$

In addition, by the definition $s^{-1}_i$, if $S \ge s_\delta$, then we have that $U_{(i)} \ge s_i^{-1}(s_\delta)$ for all $i \in \{1, \ldots, n\}$. Combined together, we have
$$P\left(\forall i,~F(X_{(i)}) \ge s_i^{-1}(s_\delta)\right)= P\left(\forall i,~U_{(i)} \ge s_i^{-1}(s_\delta)\right)\ge P\left(S \ge s_\delta\right)\ge 1-\delta.$$

\paragraph{General $F$.} For general $F$, recall we define a generalized inverse 
 function $F^{-1}(p) \triangleq \inf \{ x : F(x) \ge p \}$. Notice that for a generalized inverse function $F^{-1}$, we have $F^{-1}(U)$ is of the same distribution as $X\sim F$. That is because
 $$P(F^{-1}(U)\le c)= P(U\le F(c))= F(c).$$
We want to remark that the above formula holds for general $F$ and the first equality directly follows from the definition of the generalized inverse function $F^{-1}$. Moreover, $F^{-1}$ preserves ordering (by definition of the quantile function), i.e. $F^{-1}(U_{(1)}) \le \ldots \le F^{-1}(U_{(n)})$. Thus, we have $X(i)$ is of the same distribution as $F^{-1}(U_{(i)})$. 

By Proposition 1, eq. 24 on p.5 of \citet{shorack_empirical_2009}, we have that $$F \circ F^{-1}(t) \ge t$$ for any $t \in [0, 1]$ and any CDF $F$ (this is can obtained directly from the definition of $F^{-1}$ also can go beyond CDF, but we here only restrict ourselves in the case that $F$ is a CDF). Therefore we have $$F(F^{-1}(U_{(i)})) \ge U_{(i)} \ge s_i^{-1}(s_\delta).$$ 

Combined with the fact that $X(i)$ is of the same distribution as $F^{-1}(U_{(i)})$ for all $i\in[n]$, we have
\begin{align*}
P\left(\forall i,~F(X_{(i)})\ge s_i^{-1}(s_\delta)\right)
&=P\left(\forall i,~F(F^{-1}(U_{(i)}) \ge s_i^{-1}(s_\delta)\right)\\
&= P\left(\forall i,~U_{(i)} \ge s_i^{-1}(s_\delta)\right)\ge P\left(S \ge s_\delta\right)\\
&\ge 1-\delta.
\end{align*}

\end{proof}

\subsection{Proof of Theorem~\ref{thm:conservative_completion}}\label{sec:proof_conservative_completion}
\begin{proof}
When $x<x_1$, $F(x)\ge 0$. For any $i\in\{1,2,\cdots,n\}$, since $F(x_i)\ge b_i$, thus, for $x\in[x_i,x_{i+1})$, $F(x)\ge F(x_i)\ge b_i$ ($x_{n+1}$ is considered as $x^+$ here). Lastly, if $x\ge x^+$, $F(x)=1$. Combined together, we have $F(x)\ge G_{(\mathbf{x}, \mathbf{b})}(x)$.
\end{proof}

\subsection{Proof of Corollary~\ref{thm:conservative_completion_cb}}\label{sec:proof_conservative_completion_cb}

\begin{proof}
Notice that by Theorem~\ref{thm:conservative_completion}, if $\forall i, F(X_{(i)})\ge s_i^{-1}(s_\delta)$, then we have $F(x)\ge G_{(X_{(1:n)} s_{1:n}^{-1}(s_\delta))}$, which implies
$$P\Big(F(x)\ge G_{(X_{(1:n)}, s_{1:n}^{-1}(s_\delta))}(x)\Big)\ge P\Big(\forall i, F(X_{(i)})\ge s_i^{-1}(s_\delta)\Big).$$
Then, by Theorem~\ref{thm:anchor_points}, $P\Big(\forall i, F(X_{(i)})\ge s_i^{-1}(s_\delta)\Big)\ge 1-\delta$.
\end{proof}

\subsection{Proof of Theorem~\ref{thm:predictor_union_bound}}\label{sec:proof_predictor_union_bound}
\begin{proof}
The claim follows by applying a union bound argument over $h \in \mathcal{H}$:
$$P(\exists h \in \mathcal{H} : F^h \nsucceq \hat{G}(X^h_{1:n})) \le \sum_{h \in \mathcal{H}}P(F^h \nsucceq \hat{G}(X^h_{1:n}))) \le \delta.$$
\end{proof}
\section{Experimental Details}\label{sec:app_exp_details}

\subsection{DKW Method for Quantile Distribution-Free Upper Confidence Bounds}\label{sec:dkw_method}
A distribution upper confidence bound on $F^{-1}(\beta)$ can be constructed by using a one-sided version of the DKW inequality~\citep{dvoretzky1956asymptotic,massart1990tight}. Define the empirical distribution function as
$F_n(x) = \frac{1}{n} \sum_{i=1}^n \mathbb{I}\{X_i \le x\}$. 
Then for every $\varepsilon \ge \sqrt{\frac{1}{2n} \log 2}$, by the DKW inequality,
$$P \left\{ \sup_{x \in \mathbb{R}} F_n(x) - F(x) > \varepsilon \right\} \le \exp(-2n\varepsilon^2).$$
If we let $\delta = \exp(-2n\varepsilon^2),$ then we have $\varepsilon = \sqrt{\frac{1}{2n} \log \frac{1}{\delta}}$ and the above condition holds for any $\delta \le 1/2$.
\begin{proposition}
Let $X_{(1)} \le \ldots \le X_{(n)}$ denote the order statistics of $X_1, \ldots, X_n$, where $X_i \sim F$ for all $i \in \{1, \ldots, n \}$. Let $k = \left\lceil n \left(\beta + \sqrt{\frac{1}{2n} \log \frac{1}{\delta}} \right) \right\rceil$. 
Then for $\beta + \sqrt{\frac{1}{2n} \log \frac{1}{\delta}} \le 1$ and $\delta \le 1/2$,
$$P(F^{-1}(\beta) \le X_{(k)}) \ge 1 - \delta.$$
\end{proposition}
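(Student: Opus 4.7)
The plan is to chain together three simple facts: the one-sided DKW inequality, the evaluation of the empirical CDF at an order statistic, and the definition of the generalized inverse. The conceptual structure is essentially the same as in Theorem~\ref{thm:anchor_points}, but specialized to the single-order-statistic DKW anchor.

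First, I would instantiate the one-sided DKW inequality stated just above the proposition. Setting $\varepsilon = \sqrt{\tfrac{1}{2n}\log\tfrac{1}{\delta}}$ and noting that $\delta \le 1/2$ guarantees $\varepsilon \ge \sqrt{\tfrac{1}{2n}\log 2}$ as required, the inequality gives that with probability at least $1-\delta$,
\begin{equation*}
F(x) \;\ge\; F_n(x) - \varepsilon \qquad \text{for all } x \in \mathbb{R}.
\end{equation*}
On this good event, I only need to verify that $F(X_{(k)}) \ge \beta$, because combined with the definition $F^{-1}(\beta) = \inf\{x : F(x) \ge \beta\}$ this immediately yields $F^{-1}(\beta) \le X_{(k)}$.

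Second, I would bound $F_n(X_{(k)})$ from below. By construction of the empirical CDF, at least $k$ of the samples $X_1,\ldots,X_n$ are $\le X_{(k)}$, so $F_n(X_{(k)}) \ge k/n$. Combining with the DKW bound at $x = X_{(k)}$ gives
\begin{equation*}
F(X_{(k)}) \;\ge\; F_n(X_{(k)}) - \varepsilon \;\ge\; \tfrac{k}{n} - \varepsilon.
\end{equation*}
Finally, the choice $k = \lceil n(\beta + \varepsilon) \rceil$ ensures $k/n \ge \beta + \varepsilon$, so $F(X_{(k)}) \ge \beta$, which completes the argument on the good event. The side condition $\beta + \varepsilon \le 1$ is what ensures $k \le n$ so that $X_{(k)}$ is well-defined.

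There is no real obstacle here; the only subtle point is making sure the DKW tail bound is being applied in the correct direction (the one-sided form that upper-bounds $F_n - F$, not $F - F_n$), since we want a lower bound on $F$ and hence an upper bound on the quantile. Everything else is definitional.
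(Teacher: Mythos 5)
Your proposal is correct and follows essentially the same route as the paper's proof: one-sided DKW with $\varepsilon = \sqrt{\tfrac{1}{2n}\log\tfrac{1}{\delta}}$, the bound $F_n(X_{(k)}) \ge k/n \ge \beta + \varepsilon$, and then passing from $F(X_{(k)}) \ge \beta$ to $F^{-1}(\beta) \le X_{(k)}$ via the generalized inverse. Your added remarks on why $\delta \le 1/2$ and $\beta + \varepsilon \le 1$ are needed are a small but welcome clarification over the paper's version.
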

\begin{proof}
Observe that $F_n(X_{(i)}) \ge \frac{i}{n}$
for $i \in \{1, \ldots, n\}$. Thus for $k$ defined above, we have
$$F_n(X_{(k)}) \ge \beta + \sqrt{\frac{1}{2n} \log \frac{1}{\delta}}.$$ 
But with probability at least $1-\delta$, we have
$$F(x) \ge F_n(x) -\sqrt{\frac{1}{2n} \log \frac{1}{\delta}}$$
for all $x \in \mathbb{R}$. Thus 
$P(F(X_{(k)}) \ge \beta) \ge 1 - \delta \Rightarrow P(F^{-1}(\beta) \le X_{(k)}) \ge 1 - \delta,$ where the latter follows from the property that $F^{-1} \circ F(x) \le x$ (see e.g. Proposition 3 on p. 6 of \citet{shorack_empirical_2009}).
\end{proof}

\subsection{Definitions of Loss Functions}\label{sec:loss_def}

Balanced accuracy is computed as follows, where $\hat{Y}$ is the prediction set, $Y$ is the set of ground truth labels (which in this case will always be a single label), and $K$ is the number of classes:
\begin{gather*}
    L(\hat{Y}, Y) = 1 - \frac{1}{2}(\text{Sens}(\hat{Y}, Y)+\text{Spec}(\hat{Y}, Y))\text{, where} \\
    \text{Sens}(\hat{Y}, Y) = \frac{|\hat{Y} \cap Y|}{|Y|}\ \text{ and }\ 
    \text{Spec}(\hat{Y}, Y) = \frac{K - |Y|-|\hat{Y} \setminus Y|}{K-|Y|}.
\end{gather*}

\subsection{MS-COCO Experiment Details}\label{sec:coco_details}

MS-COCO \citep{lin2014coco} is a large object detection dataset with $80$ different categories where an image may contain one or more classes of objects and thus may have one or more ground truth labels.  Class scores are extracted using TResNet \citep{ridnik2020tresnet}.  Results are averaged over $1{,}000$ trials, where in each trial $2{,}000$ datapoints are randomly split into $500$ validation and $1{,}500$ test points.  VaR is targeted with $\beta=0.9$, while the Interval and CVaR target the ranges $[0.85,0.95]$ and $[0.9,1.0]$, respectively.  KS and BJ bound the Mean by targeting the loss for the entire range $[0.0,1.0]$.  Point-wise methods (DKW, OrderStats) are used to bound intervals by calculating VaR upper bounds over a grid of $\beta$ values within the range of interest using a Bonferroni correction for the grid size.  The grid size is 10 for the VaR Interval, and 50 for CVaR. %

\subsection{CIFAR-100 and Go Emotions Experiment Details}\label{sec:other_exp_details}

We use a multi-modal CLIP \citep{radford2021} embedding model in order to perform zero-shot image classification on the CIFAR-100 dataset using the method described in \citet{radford2021}.  CIFAR-100 features images of 100 different classes, and we use the $10{,}000$ examples available in the test split.  Our feature extractor is the \textit{ViT-B/32} version of CLIP available on Hugging Face \citep{wolf-etal-2020-transformers}.

Go Emotions \citep{demszky2020} is a dataset of comments from Reddit where binary labels are given for $28$ different fine-grained emotion categories and each comment may have one or more positive labels.  $5{,}427$ examples are drawn from the test set, and our base classifier is a \textit{bert-base-uncased} model fine-tuned on the train split, again available on Hugging Face \citep{wolf-etal-2020-transformers}.  

Experiments are run for $1{,}000$ trials with $500$ validation examples randomly selected and the rest of the dataset used for testing.  Balanced accuracy loss is bounded in the VaR-Interval $[0.6,0.9]$ in both cases.  For zero-shot CLIP, point-wise methods are run over a grid of $50$ $\beta$ values, and for Go Emotions the grid size is also $50$.

\subsection{Fairness Experiment}\label{sec:fairness_details}

We measure a class-varying weighted accuracy loss: 
\begin{gather*}
L(\hat{Y}, Y) = (1-\mu_{Y}) (1-\text{Spec}(\hat{Y}, Y)) + \mu_{Y} (1-\text{Sens}(\hat{Y}, Y))
\end{gather*}
where $\mu \in \mathbbm{R}^5$ is a vector of class weights representing the importance of sensitivity for each test item's class label. This allows for user flexibility, where for instance a school could prefer to be more sensitive towards detecting good students.  To demonstrate this, we perform our experiments with $\mu=(0.5,0.6,0.7,0.8,0.9)$.

We use $8{,}683$ samples from the UCI Nursery train split, including examples from both groups, to train a logistic regression classifier to predict a prospective student's admission ranking.  In our bounding experiments, $4{,}000$ samples from the original test split are used for validation and testing, split evenly between the groups, with $500$ examples from each group taken randomly for validation each trial.  Point-wise methods are run with a beta grid size of $50$.  Since fairness is often concerned with those examples with the highest loss, we focus on bounding the CVaR in the range $[0.9,1.0]$.

\subsection{Additional Results}\label{sec:more_results}

\subsubsection{Value of Optimizing For Target Metric}

Results for additional experiments highlighting the value of optimizing for a given target metric are shown in Table~\ref{tab:app_multi_results}

\begin{table}[!ht]
    \centering
    \begin{tabular}{llcccc}
        \toprule
           Dataset & Target & Mean & VaR & Interval & CVaR \\ 
        \midrule
            CIFAR-100 & Mean & \textbf{0.156 ± 0.007} & 0.507 ± 0.009 & 0.507 ± 0.016 & 0.664 ± 0.009 \\
            ~ & VaR & 0.202 ± 0.024 & \textbf{0.411 ± 0.028} & 0.433 ± 0.020 & 0.651 ± 0.015 \\
            ~ &  Interval & 0.200 ± 0.022 & 0.418 ± 0.031 & \textbf{0.429 ± 0.020} & 0.647 ± 0.017 \\
            ~ &      CVaR & 0.353 ± 0.126 & 0.466 ± 0.038 & 0.468 ± 0.034 & \textbf{0.627 ± 0.007} \\
        \midrule
            Go Emotions & Mean & \textbf{0.147 ± 0.007} & 0.503 ± 0.091 & 0.472 ± 0.057 & 0.700 ± 0.015 \\
            ~ & VaR & 0.178 ± 0.016 & \textbf{0.230 ± 0.023} & 0.312 ± 0.029 & 0.627 ± 0.030 \\
            ~ & Interval & 0.195 ± 0.021 & 0.257 ± 0.027 & \textbf{0.289 ± 0.026} & 0.589 ± 0.027 \\
            ~ & CVaR & 0.241 ± 0.044 & 0.314 ± 0.050 & 0.323 ± 0.045 & 0\textbf{.570 ± 0.026} \\
        \bottomrule
    \end{tabular}
    \caption{CIFAR-100 and Go Emotions results illustrating trade-offs between targeting target different metrics in predictor selection and the guarantees that can be issued for each metric given by the selected bound.  We use the Berk-Jones bound, and bold results are best for a given metric.}
    \label{tab:app_multi_results}
\end{table}

\subsubsection{Violations of Loss Guarantees and Lower Confidence Bounds}\label{sec:app_violations}

Tables \ref{tab:app_violations} and \ref{tab:app_violations_fixed} show violations of both the loss guarantee and the CDF lower confidence bound returned by each method for a subset of the MS-COCO experiments in Section \ref{sec:coco_all} and the CIFAR-100 and Go Emotions experiments in Section \ref{sec:img_text_cls}. The experiment in Table \ref{tab:app_violations} involves bounding a space of predictors, while that in Table \ref{tab:app_violations_fixed} uses a single fixed predictor.

We can make a few observations from these results.  First, specific loss measures are violated less frequently than CDF lower confidence bounds.  A CDF violation occurs whenever the true CDF lies below the CDF-LCB at any point. But many risk measures average over a range of quantiles, or only focus on certain quantiles. In that case, it is possible for the CDF bound to be violated but not the risk measure guarantee.  Next, specific loss distributions may be better behaved in practice than the theory accounts for. We deal with distribution-free methods, meaning that they hold for any possible loss distribution with only minimal assumptions (e.g. the loss is bounded). If certain properties of the loss distribution were known (for example, it belongs to a Beta distribution with unknown parameters), then we could in theory take this into account to produce tighter bounds. Since ours are distribution-free, there should be some distribution for which our bounds are tight, but it may not be the particular loss distribution we encounter. A similar phenomenon occurs with e.g. Hoeffding’s inequality, where the bound is only tight when the probability mass is concentrated on the endpoints.  Finally, bounding multiple predictors simultaneously affects the bounds, as shown by the difference in violation levels between Tables \ref{tab:app_violations} and \ref{tab:app_violations_fixed}.  Applying the union bound to multiple predictors assumes that they are independent. But in practice the predictors may be highly correlated, meaning that the ordinary Bonferroni correction is overly conservative.

\begin{table}[!ht]
    \centering
    \begin{tabular}{llcc}
    \toprule
      Metric &       Method &  Loss Viol. &  LCB Viol. \\
    \midrule
    VaR Interval &          DKW &      0.000 &          0.000 \\
    ~ &   OrderStats &      0.001 &          0.004 \\
    ~ &           KS &      0.000 &          0.000 \\
    ~ &           BJ &      0.001 &          0.002 \\
    ~ & One-sided-BJ &      0.001 &          0.007 \\
    ~ & Two-sided-BJ &      0.001 &          0.006 \\
    \midrule
        CVaR &          DKW &      0.000 &          0.000 \\
        ~ &   OrderStats &      0.000 &          0.002 \\
        ~ &           KS &      0.000 &          0.000 \\
        ~ &           BJ &      0.000 &          0.003 \\
        ~ & One-sided-BJ &      0.000 &          0.005 \\
    \bottomrule
    \end{tabular}
    \caption{Violations of loss guarantees and lower confidence bounds for MS-COCO dataset when using bounding methods to select a predictor with $\delta=0.05$.}
    \label{tab:app_violations}
\end{table}

\begin{table}[!ht]
    \centering
    \begin{tabular}{llcc}
    \toprule
        Dataset &       Method &  Loss Viol. &  LCB Viol. \\
    \midrule
      CIFAR-100 &          DKW &      0.000 &          0.000 \\
      ~ &   OrderStats &      0.000 &          0.004 \\
      ~ &           KS &      0.000 &          0.031 \\
      ~ &           BJ &      0.000 &          0.020 \\
      ~ & One-sided-BJ &      0.000 &          0.026 \\
      ~ & Two-sided-BJ &      0.002 &          0.041 \\
      \midrule
    Go Emotions &          DKW &      0.000 &          0.001 \\
    ~ &   OrderStats &      0.000 &          0.007 \\
    ~ &           KS &      0.000 &          0.028 \\
    ~ &           BJ &      0.000 &          0.021 \\
    ~ & One-sided-BJ &      0.003 &          0.041 \\
    ~ & Two-sided-BJ &      0.005 &          0.043 \\
    \bottomrule
    \end{tabular}
    \caption{Violations of loss guarantees and lower confidence bounds for CIFAR-100 and Go Emotions datasets when using a fixed predictor with $\delta=0.05$.}
    \label{tab:app_violations_fixed}
\end{table}

\subsection{Predictor Space}\label{sec:predictor_space}

For each experiment trial and random data split, we test a finite sample of $500$ linearly spaced thresholds (or predictors) on the validation set.  For computational efficiency, upper and lower bounds are set as the maximum and minimum values of the scores over the entire dataset, although this could easily be replaced by the maximum and minimum value of the scores on the validation set alone.  Results are measured on the test data using the predictor with the best guarantee for a particular target metric.

\end{document}